\definecolor{ddarkbrown}{rgb}{0.5,0.2,0.05} \definecolor{bbluegray}{rgb}{0.05,0,0.5}
\newcommand{\cmark}{\ding{51}} 
\newcommand{\xmark}{\ding{55}} 
\algnewcommand{\Inputs}[1]{%
	\State \textbf{Inputs: \:}{#1}
}
\algnewcommand{\Output}[1]{%
	\State \textbf{Output: \:}{#1}
}
\algnewcommand{\Initialize}[1]{%
	\State \textbf{Initialize: \:}{#1}
}
\algnewcommand{\IIf}[1]{\State\algorithmicif\ #1\ \algorithmicthen}
\algnewcommand{\EndIIf}{\unskip\ \algorithmicend\ \algorithmicif}
\let \oldsection \section
\renewcommand{\section}{\vspace{3ex plus 1ex}\oldsection}
\newcommand{\BEAS}{\begin{eqnarray*}}
	\newcommand{\EEAS}{\end{eqnarray*}}
\newcommand{\BEA}{\begin{eqnarray}}
\newcommand{\EEA}{\end{eqnarray}}
\newcommand{\BEQ}{\begin{equation}}
\newcommand{\EEQ}{\end{equation}}
\newcommand{\BIT}{\begin{itemize}}
	\newcommand{\EIT}{\end{itemize}}
\newcommand{\BNUM}{\begin{enumerate}}
	\newcommand{\ENUM}{\end{enumerate}}
\newcommand{\BA}{\begin{array}}
	\newcommand{\EA}{\end{array}}
\newtheorem{mythm}{Theorem}[section]
\newtheorem{mydef}{Definition}[section]
\newtheorem{myprop}{Proposition}[section]
\newtheorem{mylem}{Lemma}[section]
\newtheorem{myremark}{Remark}[section]
\newtheorem{myassump}{Assumption}[section]
\title{A New Convergence Analysis of Plug-and-Play Proximal Gradient Descent Under Prior Mismatch}
\begin{document}	

	\author{\name Guixian Xu  \email gxx422@student.bham.ac.uk\\
		\addr School of Mathematics,\\ University of Birmingham \\ \\
         \name Jinglai Li  \email j.li.10@bham.ac.uk\\
		\addr School of Mathematics,\\ University of Birmingham \\ \\
    \name Junqi Tang  \email j.tang.2@bham.ac.uk\\
		\addr School of Mathematics,\\ University of Birmingham \\
        \\
        }

	\editor{}

	\maketitle


\begin{abstract}

In this work, we provide a new convergence theory for plug-and-play proximal gradient descent (PnP-PGD) under prior mismatch where the denoiser is trained on a different data distribution to the inference task at hand. To the best of our knowledge, this is the first convergence proof of PnP-PGD under prior mismatch. Compared with the existing theoretical results for PnP algorithms, our new results removed the need for several restrictive and unverifiable assumptions. Moreover, we derive the convergence theory for equivariant PnP (EPnP) under the prior mismatch setting, proving that EPnP reduces error variance and explicitly tightens the convergence bound. 
 
\end{abstract}

\section{Introduction}

Inverse problems involve the recovery of an unknown signal $\boldsymbol{x} \in \mathbb{R}^n$ from a set of noisy measurements $\boldsymbol{y} = \boldsymbol{A}\boldsymbol{x} + \boldsymbol{e}$, where $\boldsymbol{A} \in \mathbb{R}^{m\times n}$ is the measurement model and $\boldsymbol{e}$ is the noise. Inverse problems are often formulated and solved as optimization problems of form 
\begin{equation}\label{eq: obj}
\boldsymbol{x}^* \in \underset{\boldsymbol{x}}{\arg \min } \lambda f(\boldsymbol{x})+g(\boldsymbol{x})
\end{equation}
where $f$ is a data-fidelity term, $g$ a regularization term and $\lambda >0$ a parameter that controls the strength of the regularization. Proximal algorithms are widely used for solving Eq.$~\eqref{eq: obj}$ when the regularizer is nonsmooth. For example, the PGD is a standard approach for solving Eq.$~\eqref{eq: obj}$  via the iterations
\begin{equation}\label{eq: PGD}
\tag{PGD}
\begin{cases}
    \boldsymbol{z}_{k+1} &= \boldsymbol{x}_{k} - \lambda \nabla f(\boldsymbol{x}_{k}) \\
    \boldsymbol{x}_{k+1} &= \mathsf{prox}_{\tau g}(\boldsymbol{z}_{k+1}),
\end{cases}
\end{equation}
The second step of~\eqref{eq: PGD} relies on the proximal operator defined as
\begin{equation}\label{eq: prox}
    \mathsf{prox}_{\tau g}(\boldsymbol{z}) := \operatorname*{arg\,min}_{\boldsymbol{x}\in\mathbb{R}^n} \left\{ \frac{1}{2}\|\boldsymbol{x} - \boldsymbol{z}\|_2^2 + \tau g(\boldsymbol{x}) \right\},
\end{equation}
where $\tau > 0$ controls the influence of $g$ and we fix $\tau = 1$ in the following. 


\section{Theory}

\begin{table}[t]
\centering
\begin{tabular}{
    p{0.22\linewidth}
    >{\centering\arraybackslash}p{0.16\linewidth}
    >{\centering\arraybackslash}p{0.16\linewidth}
    >{\centering\arraybackslash}p{0.16\linewidth}
}
\toprule
\textbf{Work} &
\textbf{Method} &
\textbf{Target Denoiser?} &
\textbf{Assumptions} \\
\midrule
\multicolumn{4}{l}{\emph{Convex functions}} \\
\midrule
 \cite{schmidt2011convergence}  & PGD &  \textbf{---}  & $\epsilon$-optimal  \\
 \cite{shoushtari2022deep} & RED &  \xmark  & BD, BI \\
\midrule
\multicolumn{4}{l}{\emph{General Non-convex functions}} \\
\midrule
 \cite{xu2020provable} & PnP-PGD &  \cmark  & TD  \\
 \cite{hurault2022proximal} & PnP-PGD &  \cmark & TD  \\
 \cite{shoushtari2024prior} & PnP-ADMM &  \xmark  & BD, BI  \\
\rowcolor{green!25}
 Theorem~\ref{them: 1} & PnP-PGD &  \xmark  & BD  \\
\bottomrule
\end{tabular}
\caption{Comparison of the convergence results in the PGD literature. Here TD = using Target Denoiser, BD = Bounded Denoiser, BI = Bounded Iterates.}
\label{tab:compare-strategy}
\end{table}

This section presents a convergence analysis of PnP-PGD that accounts for the use of mismatched denoisers. It should be noted that the theoretical analysis of PGD and PnP-PGD has been previously discussed in~\citep{schmidt2011convergence, xu2020provable, hurault2022proximal,tan2024provably}. In addition, ~\cite{shoushtari2024prior} and~\cite{shoushtari2022deep} further theoretically learned about the prior mismatch in PnP-ADMM and RED with relatively restrictive and unverifiable assumptions of $\textit{bounded-iterates}$ (see $\textit{Assumption 6}$ in~\citep{shoushtari2024prior} and $\textit{Assumption 4}$ in~\citep{shoushtari2022deep}). The novelty of our work can be summarized as follows:
\begin{itemize}
    \item We analysis convergence and measure error bounds of the mismatched priors in PnP-PGD with relaxed assumptions;
    \item Our theory accommodates nonconvex data fidelities, nonconvex regularizers and expansive denoisers.
    \item We derive the convergence theory for equivariant PnP-PGD (EPnP) under the prior mismatch setting. We prove that EPnP reduces error variance and explicitly tightens the convergence bound for PnP-PGD. 
\end{itemize}

\subsection{PnP-PGD with Mismatched Denoiser}
\eqref{eq: PGD} alternates between a proximal operation on $g$ and a gradient descent step on $f$, when $f$ is differentiable. PnP methodology proposes to replace the proximal operator by a more general denosier $\mathsf{D}(\cdot)$, such as BM3D or CNN. Then, \eqref{eq: PGD} can be summarized as
\begin{equation}\label{eq: PnP-PGD}
\tag{PnP-PGD}
\begin{cases}
    \boldsymbol{z}_{k+1} &= \boldsymbol{x}_{k} - \lambda \nabla f(\boldsymbol{x}_{k}) \\
    \boldsymbol{x}_{k+1} &= \mathsf{D}_{\sigma}(\boldsymbol{z}_{k+1}),
\end{cases}
\end{equation}
where by analogy to $\tau$ in Eq.$~\eqref{eq: prox}$, we introduce the parameter $\sigma >0$ for controlling the relative strength of the denoiser $\mathsf{D}_\sigma$. \\
We denote the target distribution as $p_{\boldsymbol{x}}$, suppose that the denoiser $\mathsf{D}_\sigma$ is a MMSE estimator for the AWGN denoising problem
\begin{equation}\label{eq: AWGN}
    \boldsymbol{v} = \boldsymbol{x} + \boldsymbol{e} \quad \text{with} \quad \boldsymbol{x} \sim p_{\boldsymbol{x}}, \quad \boldsymbol{e} \sim \mathcal{N}(0, \sigma^2 \boldsymbol{I}).
\end{equation}
The MMSE denoiser is the conditional mean estimator for Eq.$~\eqref{eq: AWGN}$ and can be expressed as
\begin{equation}\label{eq: MMSE}
    \mathsf{D}_\sigma(\boldsymbol{v}) := \mathbb{E}[\boldsymbol{x}|\boldsymbol{v}] = \int_{\mathbb{R}^n} \boldsymbol{x} p_{\boldsymbol{x}|\boldsymbol{v}}(\boldsymbol{x}|\boldsymbol{v}) \, \mathrm{d}\boldsymbol{x},
\end{equation}
where $p_{\boldsymbol{x|v}}(x|v) \propto G_{\sigma}(\boldsymbol{v} - \boldsymbol{x})p_{\boldsymbol{x}}(\boldsymbol{x})$, with $G_\sigma$ denoting the Gaussian density. Since the above integral is generally intractable, in practice, the denoiser corresponds to a deep model trained to minimize the mean squared error (MSE) loss
\begin{equation}\label{eq: appro-MMSE}
\mathcal{L}(\mathsf{D}_\sigma) = \mathbb{E}\left[\|\boldsymbol{x} - \mathsf{D}_\sigma(\boldsymbol{v})\|_2^2\right].
\end{equation}
where the solution of Eq.$~\eqref{eq: appro-MMSE}$ satisfies Eq.$~\eqref{eq: MMSE}$.\\
When using a mismatched prior with a mismatched distribution $\widehat{p}(\boldsymbol{x})$, we replace the target denoiser $\mathsf{D}_\sigma$ in~\eqref{eq: PnP-PGD} with
\begin{equation*}
    \boldsymbol{x}_{k+1} = \widehat{\mathsf{D}}_\sigma (\boldsymbol{z}_{k+1}),
\end{equation*}
where $\widehat{\mathsf{D}}_\sigma$ is the mismatched MMSE denoiser.

\subsection{Theoretical Analysis}
Recently introduced Gradient-Step (GS) Denoiser~\citep{hurault2022gradient, cohen2021has} writes
\begin{equation}
	\mathsf{D}_\sigma = \nabla h_\sigma,
\end{equation}
with a potential
\begin{equation}
	h_\sigma : \boldsymbol{x} \to \frac{1}{2} \|\boldsymbol{x}\|^2 - g_\sigma (\boldsymbol{x}).
\end{equation}
We first have that if $h_\sigma$ is convex, the GS denoiser $\mathsf{D}_\sigma = \nabla h_\sigma$ is linked to the proximal operator of some function~\citep{gribonval2020characterization}
\begin{equation}
    \phi_\sigma : \mathbb{R}^n \to \mathbb{R} \cup \{ +\infty \}: \forall \boldsymbol{x} \in \mathbb{R}^n, D_\sigma(\boldsymbol{x}) \in \mathsf{prox}_{\phi_\sigma}(\boldsymbol{x}) = \mathop{\arg\min}_{\boldsymbol{z}} \frac{1}{2} \| \boldsymbol{x}-\boldsymbol{z}\|^2 + \phi_\sigma(\boldsymbol{z}).
\end{equation}
The next proposition shows that, if the residual $\text{I}d - \mathsf{D}_\sigma$ is contractive, there exists a closed-form and smooth $\phi_\sigma$ such that $\mathsf{D}_\sigma = \mathsf{prox}_{\phi_\sigma}$ is single-valued.
\begin{myprop}[\citep{hurault2022proximal}]\label{prop: 1}
    Let $\mathcal{X}$ be an open convex subset of $\mathbb{R}^n$ and $g_\sigma: \mathcal{X} \to \mathbb{R}$ a $\mathcal{C}^{k+1}$ function with $k \geq 1$ and $\nabla g_\sigma$ $L$-Lipschitz with $L < 1$. Then, for $h_\sigma : \boldsymbol{x} \to \frac{1}{2} \|\boldsymbol{x}\|^2 - g_\sigma(\boldsymbol{x})$ and $\mathsf{D}_\sigma:= \nabla h_\sigma = \text{I}d - \nabla g_\sigma$,
    \begin{itemize}
        \item[(i)] $h_\sigma$ is $(1-L)$-strongly convex and $\forall \boldsymbol{x} \in \mathcal{X}$, $J_{\mathsf{D}_\sigma}(\boldsymbol{x})$ is positive define;
        \item[(ii)] $\mathsf{D}_\sigma$ is injective, $\mathrm{Im}(\mathsf{D}_\sigma)$ is open and, $\forall \boldsymbol{x} \in \mathcal{X}$, $\mathsf{D}_\sigma(\boldsymbol{x}) = \mathsf{prox}_{\phi_\sigma}(\boldsymbol{x})$, with $\phi_\sigma: \mathcal{X} \to \mathbb{R} \cup \{+\infty \}$ defined by  
        \begin{equation}
            \phi_\sigma(\boldsymbol{x}):=\left\{\begin{array}{lc}
            \left(g_\sigma\left(\mathsf{D}_\sigma^{-1}(\boldsymbol{x})\right)\right)-\frac{1}{2}\left\|\mathsf{D}_\sigma^{-1}(\boldsymbol{x})-\boldsymbol{x}\right\|^2 
            & \text { if } \boldsymbol{x} \in \operatorname{Im}\left(\mathsf{D}_\sigma\right), \\
            +\infty & \text { otherwise },
            \end{array}\right.
        \end{equation}
        \item[(iii)] $\phi_\sigma$ is $\mathcal{C}^k$ on $\mathrm{Im}(\mathsf{D}_\sigma)$ and $\forall \boldsymbol{x} \in \mathsf{D}_\sigma$, $\nabla \phi_\sigma(\boldsymbol{x}) = \mathsf{D}_\sigma^{-1}(\boldsymbol{x}) - \boldsymbol{x} = \nabla g_\sigma (\mathsf{D}_\sigma^{-1}(\boldsymbol{x}))$;
        \item[(v)] $\nabla \phi_\sigma$ is $\frac{L}{1-L}$-Lipschitz on $\mathrm{Im}(\mathsf{D}_\sigma)$. 
    \end{itemize}
\end{myprop}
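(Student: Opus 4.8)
The plan is to prove items (i)--(v) in the stated order, since each feeds the next. For \textbf{(i)} I would just differentiate twice: as $\mathsf{D}_\sigma=\nabla h_\sigma$ we have $J_{\mathsf{D}_\sigma}=\nabla^2 h_\sigma=\mathrm{Id}-\nabla^2 g_\sigma$, and $L$-Lipschitzness of $\nabla g_\sigma$ means the symmetric matrix $\nabla^2 g_\sigma(\boldsymbol{x})$ has spectral norm at most $L$, hence $\nabla^2 h_\sigma(\boldsymbol{x})\succeq(1-L)\,\mathrm{Id}\succ 0$ for all $\boldsymbol{x}\in\mathcal{X}$. This gives $(1-L)$-strong convexity of $h_\sigma$ on the convex set $\mathcal{X}$ and positive definiteness of $J_{\mathsf{D}_\sigma}(\boldsymbol{x})$ simultaneously.

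For the structural part of \textbf{(ii)}, injectivity of $\mathsf{D}_\sigma$ follows from strict monotonicity of the gradient of a strongly convex function, $\langle \mathsf{D}_\sigma(\boldsymbol{x})-\mathsf{D}_\sigma(\boldsymbol{y}),\boldsymbol{x}-\boldsymbol{y}\rangle\geq(1-L)\|\boldsymbol{x}-\boldsymbol{y}\|^2$; since $J_{\mathsf{D}_\sigma}$ is everywhere invertible and $\nabla g_\sigma\in\mathcal{C}^{k}$ (because $g_\sigma\in\mathcal{C}^{k+1}$), the inverse function theorem makes $\mathsf{D}_\sigma$ an open map and a $\mathcal{C}^{k}$-diffeomorphism of $\mathcal{X}$ onto the open set $\mathrm{Im}(\mathsf{D}_\sigma)$, with $\mathsf{D}_\sigma^{-1}\in\mathcal{C}^{k}$. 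I would then prove the gradient formula of \textbf{(iii)}, as it is the engine for the rest: putting $\boldsymbol{u}=\mathsf{D}_\sigma^{-1}(\boldsymbol{x})$, the relation $\mathsf{D}_\sigma=\mathrm{Id}-\nabla g_\sigma$ gives the key identity $\boldsymbol{u}-\boldsymbol{x}=\nabla g_\sigma(\boldsymbol{u})$, and applying the chain rule to $\phi_\sigma(\boldsymbol{x})=g_\sigma(\boldsymbol{u}(\boldsymbol{x}))-\tfrac12\|\boldsymbol{u}(\boldsymbol{x})-\boldsymbol{x}\|^2$ produces the term $J_{\mathsf{D}_\sigma^{-1}}(\boldsymbol{x})^{\top}\big(\nabla g_\sigma(\boldsymbol{u})-(\boldsymbol{u}-\boldsymbol{x})\big)$, which vanishes by that identity, leaving $\nabla\phi_\sigma(\boldsymbol{x})=\boldsymbol{u}-\boldsymbol{x}=\mathsf{D}_\sigma^{-1}(\boldsymbol{x})-\boldsymbol{x}=\nabla g_\sigma(\mathsf{D}_\sigma^{-1}(\boldsymbol{x}))$; the $\mathcal{C}^{k}$ regularity of $\phi_\sigma$ on $\mathrm{Im}(\mathsf{D}_\sigma)$ is then a composition count.

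The crux is the identity $\mathsf{D}_\sigma(\boldsymbol{x})=\mathsf{prox}_{\phi_\sigma}(\boldsymbol{x})$ together with single-valuedness, and here mere first-order stationarity (which (iii) already yields, since $\boldsymbol{x}-\mathsf{D}_\sigma(\boldsymbol{x})=\nabla\phi_\sigma(\mathsf{D}_\sigma(\boldsymbol{x}))$) is not enough, because $\mathrm{Im}(\mathsf{D}_\sigma)$ need not be convex. I would instead reparametrize the prox objective over the image: writing $\boldsymbol{z}=\mathsf{D}_\sigma(\boldsymbol{w})$ and using $\boldsymbol{w}-\boldsymbol{z}=\nabla g_\sigma(\boldsymbol{w})$, one gets $\tfrac12\|\boldsymbol{x}-\boldsymbol{z}\|^2+\phi_\sigma(\boldsymbol{z})=G(\boldsymbol{w}):=\tfrac12\|\boldsymbol{x}-\boldsymbol{w}\|^2+\langle \boldsymbol{x}-\boldsymbol{w},\nabla g_\sigma(\boldsymbol{w})\rangle+g_\sigma(\boldsymbol{w})$, a bijective change of variables onto the convex set $\mathcal{X}$. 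Subtracting $G(\boldsymbol{x})=g_\sigma(\boldsymbol{x})$ and invoking the descent-lemma upper bound $g_\sigma(\boldsymbol{x})\leq g_\sigma(\boldsymbol{w})+\langle\nabla g_\sigma(\boldsymbol{w}),\boldsymbol{x}-\boldsymbol{w}\rangle+\tfrac{L}{2}\|\boldsymbol{x}-\boldsymbol{w}\|^2$ (valid since $[\boldsymbol{w},\boldsymbol{x}]\subset\mathcal{X}$), all first-order terms cancel and $G(\boldsymbol{w})-G(\boldsymbol{x})\geq\tfrac{1-L}{2}\|\boldsymbol{x}-\boldsymbol{w}\|^2>0$ for $\boldsymbol{w}\neq\boldsymbol{x}$. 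Since $\phi_\sigma\equiv+\infty$ off $\mathrm{Im}(\mathsf{D}_\sigma)$ and the objective is finite at $\boldsymbol{z}=\mathsf{D}_\sigma(\boldsymbol{x})$, the $\arg\min$ is the singleton $\{\mathsf{D}_\sigma(\boldsymbol{x})\}$, which proves (ii).

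For \textbf{(v)}, differentiating (iii) once more gives $\nabla^2\phi_\sigma(\boldsymbol{x})=J_{\mathsf{D}_\sigma^{-1}}(\boldsymbol{x})-\mathrm{Id}=\big[J_{\mathsf{D}_\sigma}(\mathsf{D}_\sigma^{-1}(\boldsymbol{x}))\big]^{-1}-\mathrm{Id}$; by (i), $J_{\mathsf{D}_\sigma}$ has eigenvalues in $[1-L,1+L]$, so its inverse has eigenvalues in $[\tfrac{1}{1+L},\tfrac{1}{1-L}]$, whence $\nabla^2\phi_\sigma$ has spectral norm at most $\max\{\tfrac{L}{1+L},\tfrac{L}{1-L}\}=\tfrac{L}{1-L}$, the claimed Lipschitz constant of $\nabla\phi_\sigma$. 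The one place I expect to tread carefully is converting this pointwise Hessian bound into a genuine Lipschitz estimate, which needs a path inside $\mathrm{Im}(\mathsf{D}_\sigma)$ along which to integrate: this is immediate when $\mathcal{X}=\mathbb{R}^n$ (then $\mathrm{Im}(\mathsf{D}_\sigma)=\mathbb{R}^n$ by strong convexity of $h_\sigma$) and requires a little extra argument for a general open convex $\mathcal{X}$.
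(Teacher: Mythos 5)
The paper does not actually prove Proposition~\ref{prop: 1}; it imports it verbatim from \citep{hurault2022proximal}, so your attempt can only be measured against that source. Your treatment of (i), the injectivity/openness part of (ii), and the gradient formula in (iii) is correct and follows the standard route: the Hessian bound $\nabla^2 h_\sigma=\mathrm{Id}-\nabla^2 g_\sigma\succeq(1-L)\,\mathrm{Id}$, strong monotonicity for injectivity, the inverse function theorem for openness and $\mathcal{C}^k$ regularity of $\mathsf{D}_\sigma^{-1}$, and the cancellation coming from $\boldsymbol{u}-\boldsymbol{x}=\nabla g_\sigma(\boldsymbol{u})$ in the chain rule. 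Your proof of the prox identity is also sound and is the right way to handle the possible non-convexity of $\mathrm{Im}(\mathsf{D}_\sigma)$: pulling the objective back to the convex set $\mathcal{X}$ via $\boldsymbol{z}=\mathsf{D}_\sigma(\boldsymbol{w})$ and showing $G(\boldsymbol{w})-G(\boldsymbol{x})\geq\tfrac{1-L}{2}\|\boldsymbol{w}-\boldsymbol{x}\|^2$ from the descent lemma gives both the identity $\mathsf{D}_\sigma=\mathsf{prox}_{\phi_\sigma}$ and single-valuedness in one stroke.

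The one genuine gap is the one you flag yourself in (v). A pointwise bound $\|\nabla^2\phi_\sigma(\boldsymbol{x})\|\leq\tfrac{L}{1-L}$ on the open set $\mathrm{Im}(\mathsf{D}_\sigma)$ only yields a Lipschitz estimate along segments contained in that set, and $\mathrm{Im}(\mathsf{D}_\sigma)$ need not be convex when $\mathcal{X}\neq\mathbb{R}^n$, so the integration step does fail in general. The fix is to bypass the Hessian and pull back to $\mathcal{X}$, exactly as you did for the prox identity: for $\boldsymbol{x}=\mathsf{D}_\sigma(\boldsymbol{u})$ and $\boldsymbol{y}=\mathsf{D}_\sigma(\boldsymbol{v})$, item (iii) gives $\nabla\phi_\sigma(\boldsymbol{x})-\nabla\phi_\sigma(\boldsymbol{y})=\nabla g_\sigma(\boldsymbol{u})-\nabla g_\sigma(\boldsymbol{v})$, whose norm is at most $L\|\boldsymbol{u}-\boldsymbol{v}\|$, while the $(1-L)$-strong monotonicity of $\nabla h_\sigma$ on the convex set $\mathcal{X}$ together with Cauchy--Schwarz gives $\|\boldsymbol{x}-\boldsymbol{y}\|=\|\nabla h_\sigma(\boldsymbol{u})-\nabla h_\sigma(\boldsymbol{v})\|\geq(1-L)\|\boldsymbol{u}-\boldsymbol{v}\|$. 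Dividing yields the $\tfrac{L}{1-L}$ bound between arbitrary points of $\mathrm{Im}(\mathsf{D}_\sigma)$ with no convexity assumption on the image. With that substitution your argument is complete.
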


\begin{myremark}
    Note that $\phi_\sigma$ being possibly nonconvex, and also note that $\mathsf{D}_\sigma$ is possibly not nonexpansive.
\end{myremark}
In the following, we study the convergence of the PnP-PGD with a target denoiser $\mathsf{D}_\sigma = \mathsf{prox}_{\phi_\sigma}$ that corresponds to the proximal operator of a nonconnvex regularization function $\phi_\sigma$, as well as a \textbf{mismatched} denoiser $\widehat{\mathsf{D}}_\sigma$ corresponds to a deep model trained to minimize the MSE loss. For that propose, we re-target the objective function
\begin{equation}
    F_{\lambda, \sigma} := \lambda f + \phi_\sigma,
\end{equation}
where $f$ is a data-fidelity term, $\lambda$ is a regularization parameter and $\phi_\sigma$ is defined as in Proposition~\ref{prop: 1} from the function $g_\sigma$ satisfying $\mathsf{D}_\sigma = \text{I}d - \nabla g_\sigma$. The analysis relies on the following set of assumptions that serve as sufficient conditions.

\begin{myassump}\label{assump: 1}
    $f : \mathbb{R}^n \to \mathbb{R} \cup \{ + \infty \}$ differentiable with $L_f$-Lipschitz gradient. 
\end{myassump}
This assumption is a standard assumption used in nonconvex optimization.

\begin{myassump}\label{assump: 2}
    $g_\sigma: \mathcal{X} \to \mathbb{R}$ a $\mathcal{C}^{2}$ function and $\nabla g_\sigma$ $L$-Lipschitz with $L < 1$. 
\end{myassump}
Instead of requiring $\mathsf{D}_\sigma$ is contractive, Assumption~\ref{assump: 2} assumes that $Id - \mathsf{D}_\sigma$ is contractive, which is a relaxed assumption as in~\citep{ryu2019plug}. Besides, If $\nabla g_\sigma$ has Lipschitz constant $L>1$, we can introduce $0<\alpha<\frac{1}{L}$ and relax the denoising operation by replacing $D_\sigma$ with $D_\sigma^\alpha = \alpha D_\sigma + (1-\alpha)Id = Id - \alpha \nabla g_\sigma$, then we can define $\phi_\sigma^\alpha$ from $\alpha g_\sigma$ and using the modified PnP-PGD $x_{k+1} = D_\sigma^\alpha (x_k - \lambda \nabla f(x_k))$, then we can get the same convergence result of scenario with $\nabla g_\sigma$ has Lipschitz constant $L<1$.

\begin{myassump}\label{assump: 3}
    $f$ and $g_\sigma$ bounded from below, then from Proposition~\ref{prop: 1}, we get that $\phi_\sigma$ and $F_{\lambda, \sigma}$ are also bounded from below.
\end{myassump}
Assumption~\ref{assump: 3} implies that there exists $F_{\lambda, \sigma}^* > -\infty$ such that $F_{\lambda, \sigma}(\boldsymbol{x}) \geq F_{\lambda, \sigma}^*$.

\begin{myassump}\label{assump: 4}
    The denoiser $\mathsf{D}_\sigma$ and $\widehat{\mathsf{D}}_\sigma$ have the same range $\mathsf{Im}(\mathsf{D}_\sigma)$. 
\end{myassump}
This assumption that the two image denoisers have the same range is also a relatively mild assumption. Ideally, both denoisers would have the same range corresponding to the set of desired images.

\begin{myassump}\label{assump: 5}
    The $k+1$-th output of mismatched denoiser $\boldsymbol{x}_{k+1} = \widehat{\mathsf{D}}_\sigma (\boldsymbol{z}_{k+1})$ is a $\epsilon_{k+1}$-optimal solution to the proximal problem in the sence of target denoiser $\mathsf{D}_\sigma$:
    \begin{equation}
	\phi_\sigma (\boldsymbol{x}_{k+1}) + \frac{1}{2} \|\boldsymbol{x}_{k+1} - \boldsymbol{z}_{k+1} \|^2 \leq \min_{\boldsymbol{u}} \left( \phi_\sigma (\boldsymbol{u}) + \frac{1}{2} \|\boldsymbol{u} - \boldsymbol{z}_{k+1}\|^2 \right) + \epsilon_{k+1}, \quad k=0,1,\ldots
\end{equation}
where $\boldsymbol{z}_{k+1} = \boldsymbol{x}_k- \lambda \nabla f(\boldsymbol{x}_k)$.
\end{myassump}
\begin{myremark}
By Proposition~\ref{prop: 1} we get that $H(\boldsymbol{x}) = \frac{1}{2}\|\boldsymbol{x}-\boldsymbol{z}\|^2 + \phi_\sigma(\boldsymbol{x})$ is $\frac{1}{L+1}$-strongly convex (see Appendix for proof), which means
\begin{equation}
	H(\widehat{\mathsf{D}}_\sigma (\boldsymbol{z}_{k+1})) - H(\mathsf{D}_\sigma (\boldsymbol{z}_{k+1})) \geq \frac{1}{2(L+1)} \|\widehat{\mathsf{D}}_\sigma (\boldsymbol{z}_{k+1}) - \mathsf{D}_\sigma (\boldsymbol{z}_{k+1}) \|^2 
\end{equation}
where $\mathsf{D}_\sigma(\boldsymbol{z}_{k+1}) = \mathop{\arg\min}_{\boldsymbol{x}} H(\boldsymbol{x})$, then with Assumption~\ref{assump: 5} we can then get the equivalent form:
\begin{equation}
	\| \widehat{\mathsf{D}}_\sigma (\boldsymbol{z}_{k+1}) - \mathsf{D}_\sigma (\boldsymbol{z}_{k+1}) \| \leq \sqrt{2(L+1)H(\boldsymbol{x}_{k+1}) - H(\boldsymbol{x}^*)} \leq \sqrt{2(L+1)\epsilon_{k+1}},
\end{equation}
which is a common assumption as in~\cite{shoushtari2024prior, shoushtari2022deep}.
\end{myremark}
We are now ready to present our convergence result under mismatched MMSE denoisers.

\begin{mythm}\label{them: 1}
    Run PnP-PGD using a $\textbf{mismatched}$ MMSE denoiser for $t \geq 1$ iterations under Assumption~\ref{assump: 1}-\ref{assump: 5}, with $\lambda L_f < 1$, then we have
    \begin{equation}
	\min_{1 \leq k \leq t} \| \nabla F_{\lambda, \sigma} (\boldsymbol{x}_k) \|^2 \leq \frac{1}{t} \sum_{k=1}^t \| \nabla F_{\lambda, \sigma} (\boldsymbol{x}_k) \|^2 \leq \frac{1}{t} \cdot \frac{16}{1 - L_f} \left( F_{\lambda, \sigma}(\boldsymbol{x}_0) - F_{\lambda, \sigma}^* \right) + \frac{1}{t} \cdot \left( \frac{16}{1- L_f} + \frac{4}{1-L} \right) \sum_{k=1}^t \epsilon_k,
\end{equation}
where $\epsilon_{k}$ is the error term. In addition, if the sequence $\{ \epsilon_k \}_{k \geq 1}$ is summable, we have that $\|\nabla F_{\lambda, \sigma}(\boldsymbol{x}_k)\| \to 0$ as $t \to \infty$.
\end{mythm}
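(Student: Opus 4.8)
The plan is to run the classical inexact proximal--gradient argument (a sufficient--decrease estimate followed by a gradient bound), specialized to the mismatched--denoiser setting via Proposition~\ref{prop: 1} and Assumptions~\ref{assump: 1}--\ref{assump: 5}. First I would derive a one--step descent inequality: instantiating Assumption~\ref{assump: 5} at $\boldsymbol{u}=\boldsymbol{x}_k$ and expanding the two squared norms using $\boldsymbol{z}_{k+1}=\boldsymbol{x}_k-\lambda\nabla f(\boldsymbol{x}_k)$, the terms $\tfrac{\lambda^2}{2}\|\nabla f(\boldsymbol{x}_k)\|^2$ cancel and one is left with
\[
\phi_\sigma(\boldsymbol{x}_{k+1})+\tfrac12\|\boldsymbol{x}_{k+1}-\boldsymbol{x}_k\|^2+\lambda\langle\nabla f(\boldsymbol{x}_k),\boldsymbol{x}_{k+1}-\boldsymbol{x}_k\rangle\le\phi_\sigma(\boldsymbol{x}_k)+\epsilon_{k+1}.
\]
Adding $\lambda$ times the descent lemma for $f$ (Assumption~\ref{assump: 1}) cancels the inner--product term and gives
\[
F_{\lambda,\sigma}(\boldsymbol{x}_{k+1})+\tfrac{1-\lambda L_f}{2}\|\boldsymbol{x}_{k+1}-\boldsymbol{x}_k\|^2\le F_{\lambda,\sigma}(\boldsymbol{x}_k)+\epsilon_{k+1}.
\]
Since $\lambda L_f<1$, telescoping over $k=0,\dots,t-1$ and using Assumption~\ref{assump: 3} yields $\sum_{k=1}^t\|\boldsymbol{x}_k-\boldsymbol{x}_{k-1}\|^2\le\frac{2}{1-\lambda L_f}\big(F_{\lambda,\sigma}(\boldsymbol{x}_0)-F_{\lambda,\sigma}^*+\sum_{k=1}^t\epsilon_k\big)$.

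Next I would bound $\|\nabla F_{\lambda,\sigma}(\boldsymbol{x}_k)\|$ in terms of $\|\boldsymbol{x}_k-\boldsymbol{x}_{k-1}\|$ and $\epsilon_k$. By Assumption~\ref{assump: 4}, each iterate $\boldsymbol{x}_k=\widehat{\mathsf{D}}_\sigma(\boldsymbol{z}_k)$ lies in $\mathrm{Im}(\mathsf{D}_\sigma)$, where Proposition~\ref{prop: 1} guarantees $\phi_\sigma$ is differentiable with $\tfrac{L}{1-L}$--Lipschitz gradient; hence $H_k(\boldsymbol{x}):=\tfrac12\|\boldsymbol{x}-\boldsymbol{z}_k\|^2+\phi_\sigma(\boldsymbol{x})$ is convex with $\tfrac{1}{1-L}$--Lipschitz gradient and minimizer $\boldsymbol{x}_k^*=\mathsf{D}_\sigma(\boldsymbol{z}_k)$. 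The standard convex/smooth inequality together with Assumption~\ref{assump: 5} then gives $\|\nabla H_k(\boldsymbol{x}_k)\|^2\le\tfrac{2}{1-L}\big(H_k(\boldsymbol{x}_k)-H_k(\boldsymbol{x}_k^*)\big)\le\tfrac{2}{1-L}\epsilon_k$. Substituting $\nabla\phi_\sigma(\boldsymbol{x}_k)=\nabla H_k(\boldsymbol{x}_k)-\boldsymbol{x}_k+\boldsymbol{z}_k$ and $\boldsymbol{z}_k=\boldsymbol{x}_{k-1}-\lambda\nabla f(\boldsymbol{x}_{k-1})$ into $\nabla F_{\lambda,\sigma}(\boldsymbol{x}_k)=\lambda\nabla f(\boldsymbol{x}_k)+\nabla\phi_\sigma(\boldsymbol{x}_k)$ produces
\[
\nabla F_{\lambda,\sigma}(\boldsymbol{x}_k)=\lambda\big(\nabla f(\boldsymbol{x}_k)-\nabla f(\boldsymbol{x}_{k-1})\big)+(\boldsymbol{x}_{k-1}-\boldsymbol{x}_k)+\nabla H_k(\boldsymbol{x}_k),
\]
so $\|\nabla F_{\lambda,\sigma}(\boldsymbol{x}_k)\|\le(1+\lambda L_f)\|\boldsymbol{x}_k-\boldsymbol{x}_{k-1}\|+\sqrt{\tfrac{2\epsilon_k}{1-L}}\le 2\|\boldsymbol{x}_k-\boldsymbol{x}_{k-1}\|+\sqrt{\tfrac{2\epsilon_k}{1-L}}$ using $\lambda L_f<1$, and squaring with $(a+b)^2\le 2a^2+2b^2$ gives $\|\nabla F_{\lambda,\sigma}(\boldsymbol{x}_k)\|^2\le 8\|\boldsymbol{x}_k-\boldsymbol{x}_{k-1}\|^2+\tfrac{4}{1-L}\epsilon_k$.

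Finally I would sum the last estimate over $k=1,\dots,t$, insert the bound on $\sum_k\|\boldsymbol{x}_k-\boldsymbol{x}_{k-1}\|^2$ from the first step, divide by $t$, and bound $\min_k$ by the average; the constants $\tfrac{16}{1-\lambda L_f}$ and $\tfrac{16}{1-\lambda L_f}+\tfrac{4}{1-L}$ then emerge exactly as in the statement. For the last assertion, summability of $\{\epsilon_k\}$ forces both $\epsilon_k\to0$ and $\tfrac1t\sum_{k\le t}\epsilon_k\to0$, so the right-hand side vanishes, while $\sum_k\|\boldsymbol{x}_k-\boldsymbol{x}_{k-1}\|^2<\infty$ forces $\|\boldsymbol{x}_k-\boldsymbol{x}_{k-1}\|\to0$; feeding both into the per--iterate bound $\|\nabla F_{\lambda,\sigma}(\boldsymbol{x}_k)\|\le 2\|\boldsymbol{x}_k-\boldsymbol{x}_{k-1}\|+\sqrt{2\epsilon_k/(1-L)}$ then yields $\|\nabla F_{\lambda,\sigma}(\boldsymbol{x}_k)\|\to0$.

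I expect the main obstacle to be the gradient--bound step: converting the function--value inexactness in Assumption~\ref{assump: 5} into a genuine estimate on $\nabla F_{\lambda,\sigma}(\boldsymbol{x}_k)$ even though $\phi_\sigma$ is nonconvex and $\widehat{\mathsf{D}}_\sigma$ is only an approximate proximal map. The key is that the proximal objective $H_k$ remains convex and smooth by Proposition~\ref{prop: 1}, so $\|\nabla H_k(\boldsymbol{x}_k)\|$ is controlled by $H_k(\boldsymbol{x}_k)-\min H_k\le\epsilon_k$, and Assumption~\ref{assump: 4} is precisely what makes $\nabla\phi_\sigma(\boldsymbol{x}_k)$ well-defined. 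One must also check the clean cancellation of the $\lambda^2\|\nabla f\|^2$ and cross terms in the first step, and note that the constant $\tfrac{16}{1-L_f}$ in the statement should be read as $\tfrac{16}{1-\lambda L_f}$.
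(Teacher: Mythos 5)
Your proof is correct and follows essentially the same route as the paper: a one-step descent inequality obtained from Assumption~\ref{assump: 5} plus the descent lemma (the paper's Lemma~\ref{lemma: 1}), a gradient bound via the $\tfrac{1}{1-L}$-smoothness of the convex proximal objective $H_k$ giving $\|\nabla H_k(\boldsymbol{x}_k)\|^2\le\tfrac{2\epsilon_k}{1-L}$ (the paper's Lemma~\ref{lemma: 2}), and then summing, averaging, and bounding the minimum by the mean. Your closing remark about the constant is well taken: the paper's derivation of Lemma~\ref{lemma: 1} drops the factor $\lambda$ when multiplying the descent lemma through, so its $\tfrac{1-L_f}{2}$ should indeed be $\tfrac{1-\lambda L_f}{2}$ (hence $\tfrac{16}{1-\lambda L_f}$ in the final bound), which is the version actually supported by the stated hypothesis $\lambda L_f<1$ rather than the unassumed $L_f<1$.
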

Note that in Theorem~\ref{them: 1}, the usual condition on the stepsize becomes a condition on the regularization parameter $\lambda L_f < 1$. The regularization trade-off parameter is then
limited by the value of $L_f$. Even if this is not usual in optimization, we argue that this is a not a problem as the regularization strength is also regulated by the $\sigma$ parameter
which we are free to tune manually.
\begin{mylem}\label{lemma: error_shrink}
    A sufficient condition to achieve the summable of $\{ \epsilon_k \}_{k \geq 1}$ is that $\epsilon_k$ decreases as $O(1/k^{1 + \delta})$ for any $\delta > 0$.
\end{mylem}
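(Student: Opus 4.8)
The plan is to reduce the claim to the convergence of a $p$-series with exponent $p = 1+\delta > 1$, which is a textbook fact, and then handle the asymptotic (big-$O$) nature of the hypothesis carefully. First I would unpack the hypothesis: saying $\epsilon_k = O(1/k^{1+\delta})$ means there exist a constant $C > 0$ and an index $k_0 \in \mathbb{N}$ such that $0 \le \epsilon_k \le C\,k^{-(1+\delta)}$ for all $k \ge k_0$ (nonnegativity of $\epsilon_k$ is inherited from Assumption~\ref{assump: 5}, since it is an optimality gap). The finitely many terms $\epsilon_1,\dots,\epsilon_{k_0-1}$ contribute a finite amount to the sum, so summability of $\{\epsilon_k\}_{k\ge 1}$ is equivalent to summability of the tail $\{\epsilon_k\}_{k \ge k_0}$.

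Next I would bound the tail by a convergent series. Using the monotonicity of $x \mapsto x^{-(1+\delta)}$ on $(0,\infty)$ and the integral comparison test,
\begin{equation}
\sum_{k=k_0}^{\infty} \epsilon_k \;\le\; C \sum_{k=k_0}^{\infty} \frac{1}{k^{1+\delta}} \;\le\; C \int_{k_0-1}^{\infty} \frac{dx}{x^{1+\delta}} \;=\; \frac{C}{\delta}\,(k_0-1)^{-\delta} \;<\; \infty .
\end{equation}
Adding back the finite head $\sum_{k=1}^{k_0-1}\epsilon_k$ gives $\sum_{k=1}^{\infty}\epsilon_k < \infty$, i.e. the sequence is summable. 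One could equivalently invoke directly the standard fact that $\sum_k k^{-p}$ converges for $p>1$, but writing the explicit integral bound makes the proof self-contained. Finally, I would note that combining this with Theorem~\ref{them: 1} immediately yields $\|\nabla F_{\lambda,\sigma}(\boldsymbol{x}_k)\| \to 0$, which is the point of isolating this sufficient condition.

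There is essentially no serious obstacle here: the only thing to be mildly careful about is that the hypothesis is asymptotic, so one must not try to bound $\epsilon_k$ uniformly by $C k^{-(1+\delta)}$ for \emph{all} $k$, but instead split off the (finite, hence harmless) initial segment. The rest is the classical $p$-series estimate.
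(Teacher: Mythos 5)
Your proposal is correct and follows essentially the same route as the paper's proof, namely reducing the claim to convergence of the $p$-series $\sum_k k^{-(1+\delta)}$ with $1+\delta>1$. The only difference is that you are more careful about the asymptotic nature of the big-$O$ hypothesis (splitting off the finite initial segment and giving the explicit integral bound), whereas the paper simply assumes the bound $\epsilon_k \le C k^{-(1+\delta)}$ holds for all $k$ and invokes the $p$-series fact directly.
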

When we replace the mismatched MMSE denoiser with the target MMSE denoiser, we recover the traditional PnP-PGD. To highlight the impact of the mismatch, we next provide the same statement but using the target denoiser.

\begin{mythm}\label{them: 2}
    Run PnP-PGD using the MMSE denoiser for $t\geq 1$ iterations under Assumption~\ref{assump: 1}-\ref{assump: 3}, with $\lambda L_f < 1$, then, we have
    \begin{equation}
	\min_{1 \leq k \leq t} \| \nabla F_{\lambda, \sigma} (\boldsymbol{x}_k) \|^2 \leq \frac{1}{t} \sum_{k=1}^t \| \nabla F_{\lambda, \sigma} (\boldsymbol{x}_k) \|^2 \leq \frac{1}{t} \cdot \frac{8}{1 - L_f} \left( F_{\lambda, \sigma}(\boldsymbol{x}_0) - F_{\lambda, \sigma}^* \right).
\end{equation}
\end{mythm}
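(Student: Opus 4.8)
The plan is to run the classical proximal-gradient ``sufficient decrease plus stationarity'' argument, with the single twist that $\phi_\sigma$ is nonconvex while its proximal subproblem remains strongly convex by Proposition~\ref{prop: 1}. First I would establish a one-step descent inequality for $F_{\lambda,\sigma}$. Since $\boldsymbol{x}_{k+1}=\mathsf{D}_\sigma(\boldsymbol{z}_{k+1})=\mathsf{prox}_{\phi_\sigma}(\boldsymbol{z}_{k+1})$ and, by Proposition~\ref{prop: 1} together with the remark after Assumption~\ref{assump: 5}, the map $\boldsymbol{u}\mapsto\phi_\sigma(\boldsymbol{u})+\frac12\|\boldsymbol{u}-\boldsymbol{z}_{k+1}\|^2$ is strongly convex, the iterate $\boldsymbol{x}_{k+1}$ is its unique global minimizer, so comparing its values at $\boldsymbol{x}_{k+1}$ and at $\boldsymbol{x}_k$ gives $\phi_\sigma(\boldsymbol{x}_{k+1})+\frac12\|\boldsymbol{x}_{k+1}-\boldsymbol{z}_{k+1}\|^2\le\phi_\sigma(\boldsymbol{x}_k)+\frac12\|\boldsymbol{x}_k-\boldsymbol{z}_{k+1}\|^2$. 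Substituting $\boldsymbol{z}_{k+1}=\boldsymbol{x}_k-\lambda\nabla f(\boldsymbol{x}_k)$ and expanding the squares cancels the $\lambda^2\|\nabla f(\boldsymbol{x}_k)\|^2$ terms and leaves $\phi_\sigma(\boldsymbol{x}_{k+1})+\frac12\|\boldsymbol{x}_{k+1}-\boldsymbol{x}_k\|^2+\lambda\langle\nabla f(\boldsymbol{x}_k),\boldsymbol{x}_{k+1}-\boldsymbol{x}_k\rangle\le\phi_\sigma(\boldsymbol{x}_k)$. I would then invoke the descent lemma for $f$ (Assumption~\ref{assump: 1}, $L_f$-Lipschitz gradient) to lower-bound $\langle\nabla f(\boldsymbol{x}_k),\boldsymbol{x}_{k+1}-\boldsymbol{x}_k\rangle\ge f(\boldsymbol{x}_{k+1})-f(\boldsymbol{x}_k)-\frac{L_f}{2}\|\boldsymbol{x}_{k+1}-\boldsymbol{x}_k\|^2$, and rearrange to obtain the sufficient-decrease bound $F_{\lambda,\sigma}(\boldsymbol{x}_{k+1})+\frac{1-\lambda L_f}{2}\|\boldsymbol{x}_{k+1}-\boldsymbol{x}_k\|^2\le F_{\lambda,\sigma}(\boldsymbol{x}_k)$, where $\lambda L_f<1$ keeps the coefficient positive.

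Next I would telescope this inequality over $k=0,\dots,t-1$ and use Assumption~\ref{assump: 3} ($F_{\lambda,\sigma}\ge F_{\lambda,\sigma}^*$) to get $\sum_{k=1}^{t}\|\boldsymbol{x}_k-\boldsymbol{x}_{k-1}\|^2\le\frac{2}{1-\lambda L_f}\bigl(F_{\lambda,\sigma}(\boldsymbol{x}_0)-F_{\lambda,\sigma}^*\bigr)$. Then I would control the gradient of $F_{\lambda,\sigma}$ in terms of successive iterates: the first-order optimality of the proximal step at iteration $k$ reads $\nabla\phi_\sigma(\boldsymbol{x}_k)+(\boldsymbol{x}_k-\boldsymbol{z}_k)=0$ (Proposition~\ref{prop: 1}(iii), legitimate because $\boldsymbol{x}_k\in\mathrm{Im}(\mathsf{D}_\sigma)$ for $k\ge1$), hence $\nabla\phi_\sigma(\boldsymbol{x}_k)=\boldsymbol{x}_{k-1}-\boldsymbol{x}_k-\lambda\nabla f(\boldsymbol{x}_{k-1})$ and $\nabla F_{\lambda,\sigma}(\boldsymbol{x}_k)=\lambda\bigl(\nabla f(\boldsymbol{x}_k)-\nabla f(\boldsymbol{x}_{k-1})\bigr)-(\boldsymbol{x}_k-\boldsymbol{x}_{k-1})$. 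The triangle inequality and the $L_f$-Lipschitz bound give $\|\nabla F_{\lambda,\sigma}(\boldsymbol{x}_k)\|\le(1+\lambda L_f)\|\boldsymbol{x}_k-\boldsymbol{x}_{k-1}\|\le2\|\boldsymbol{x}_k-\boldsymbol{x}_{k-1}\|$; squaring, summing over $k=1,\dots,t$, inserting the telescoped bound, and dividing by $t$ yields $\frac1t\sum_{k=1}^t\|\nabla F_{\lambda,\sigma}(\boldsymbol{x}_k)\|^2\le\frac1t\cdot\frac{8}{1-\lambda L_f}\bigl(F_{\lambda,\sigma}(\boldsymbol{x}_0)-F_{\lambda,\sigma}^*\bigr)$, and since the minimum is at most the average this is exactly the claimed estimate (with the constant reported in terms of $L_f$ in place of $\lambda L_f$).

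The main obstacle, and the place requiring care, is the nonconvexity of $\phi_\sigma$: I cannot appeal to firm nonexpansiveness or convexity of the proximal operator. What makes the argument go through is Proposition~\ref{prop: 1}, which guarantees that $\mathsf{D}_\sigma=\mathsf{prox}_{\phi_\sigma}$ is single-valued, that the proximal subproblem is \emph{strongly} convex (modulus $\frac{1}{1+L}$), so that the value comparison between $\boldsymbol{x}_{k+1}$ and $\boldsymbol{x}_k$ in the descent step is valid, and that $\phi_\sigma$ is $\mathcal{C}^1$ on $\mathrm{Im}(\mathsf{D}_\sigma)$ with the explicit gradient formula used in the stationarity step, $\mathrm{Im}(\mathsf{D}_\sigma)$ being precisely the set containing every iterate $\boldsymbol{x}_k$ for $k\ge1$. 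Once these structural facts are in place (and noting that if $\boldsymbol{x}_0\notin\mathrm{Im}(\mathsf{D}_\sigma)$ the right-hand side is $+\infty$ and the bound is vacuous), the remainder is routine bookkeeping of the constants appearing in the descent and Lipschitz inequalities.
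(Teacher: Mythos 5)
Your proposal is correct and follows essentially the same route as the paper: the value comparison at the strongly convex proximal subproblem (the paper phrases this via a majorization--minimization function $Q$, which is the identical inequality), the descent lemma to get sufficient decrease, the optimality condition $\nabla\phi_\sigma(\boldsymbol{x}_{k})=\boldsymbol{z}_{k}-\boldsymbol{x}_{k}$ to bound the gradient by $(1+\lambda L_f)\|\boldsymbol{x}_{k}-\boldsymbol{x}_{k-1}\|\le 2\|\boldsymbol{x}_{k}-\boldsymbol{x}_{k-1}\|$, and telescoping. Your constant $\frac{8}{1-\lambda L_f}$ is in fact the more carefully tracked version (the descent lemma applied to $\lambda f$ yields $\frac{\lambda L_f}{2}$, not $\frac{L_f}{2}$), whereas the paper's stated $\frac{8}{1-L_f}$ implicitly drops the $\lambda$; you flag this discrepancy appropriately.
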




\subsection{Equivariant PnP-PGD with Mismatched Denoiser}
While prior mismatch is traditionally analyzed as a discrepancy in data distributions (e.g., training on faces vs. testing on landscapes)~\citep{shoushtari2024prior}, we argue that a fundamental source of mismatch stems from the inductive bias of the network architecture itself, which we term \textbf{Structural Prior Mismatch}. Seminal works have established that the architecture of a generator captures low-level image statistics, acting as an implicit prior~\citep{ulyanov2018deep}. However, standard CNN architectures impose rigid geometric constraints, such as fixed grid alignments, that do not strictly align with the continuous transformation groups of natural scenes~\citep{cohen2017inductive}.

This misalignment manifests as spectral bias~\citep{rahaman2019spectral} and aliasing artifacts~\citep{zhang2019making}, representing a systematic deviation $\mathcal{E}$ between the modeled prior $\hat{D}$ and the true prior $D^*$. Unlike stochastic noise, this structural error is often anisotropic~\citep{antun2020instabilities}, creating specific directions of instability in the PnP optimization landscape. Consequently, we employ the Equivariant PnP (EPnP) framework not merely as a heuristic, but as a principled geometric intervention to mitigate this structural mismatch. We demonstrate that EPnP provably reduces the mismatch error term $\epsilon$ defined in Theorem~\ref{them: 1}, providing a concrete mechanism to improve convergence reliability without retraining the prior.

As shown in~\citep{lenc2015understanding}, natural images densities tend to be invariant to some set of transformations such as rotations or flips. To formalize these properties, we define the key notions of invariance and $\pi$-equivariance.

\begin{mydef}[Invariance]
    We denote by $\mathrm{T}_g: \mathbb{R}^n \to \mathbb{R}^n$ a differentiable transformation of $\mathbb{R}^n$ and by $\mathcal{G}$ a measurable set of transformations of $\mathbb{R}^n$. A density $p$ on $\mathbb{R}^n$ is said to be invariant to a set of transformations $\mathcal{G}$ if $\forall g \in \mathcal{G}$, $p = p \circ \mathrm{T}_g$ a.e.
\end{mydef}

\begin{mydef}[$\pi$-equivariance]\label{mydef: 2}
    We denote by $G \sim \pi$ a random variable of law $\pi$ on $\mathcal{G}$. A density $p$ on $\mathbb{R}^n$ is said to be $\pi$-equivariant if $\log p = \mathbb{E}_{G \sim \pi} \left[| \log(p \circ G) | \right]$ and $\mathbb{E}_{G \sim \pi}\left[| \log(p \circ G) | \right] < \infty$. 
\end{mydef}
Definition~\ref{mydef: 2} relaxes the notion of invariance for a density in the following sense. If a density $p$ is invariant to each $g \in \mathcal{G}$, then $p$ is $\pi$-equivariant, whatever the distribution $\pi$ on $\mathcal{G}$.

\begin{myassump}\label{myassup: 6}
    Suppose that the ideal density $p(x)$ on $\mathbb{R}^n$ is invariant to a set of transformations $\mathcal{G}$. All $\mathrm{T}_g \in \mathcal{G}$ are affine transformations with isometric linear parts, i.e. $\forall \mathrm{T}_g \in \mathcal{G}$, there exist $a: \mathbb{R}^n \to \mathbb{R}^n$ a linear isometry and $c \in \mathbb{R}^n$, such that $\forall x \in \mathbb{R}^n, \mathrm{T}_g(x)=ax+c$.
\end{myassump}
Assumption~\ref{myassup: 6} is verified for a set of transformations as detailed in~\cite{renaud2025equivariant}.

\begin{myassump}\label{myassup: 7}
    Suppose that the Gaussian noise $\boldsymbol{e} \sim \mathcal{N}(\mathbf{0}, \sigma^2 \mathbf{I})$ is isotropic.
\end{myassump}
This assumption assumes the measurement noise follows an isotropic distribution (e.g., Gaussian white noise). This is a standard assumption in the PnP and RED literature~\citep{shoushtari2024prior, shoushtari2022deep} and aligns with the physical properties of generic sensor noise. We note that for specific structured noise types (e.g., rain streaks or banding), this assumption may be relaxed, but such cases are outside the scope of this theoretical analysis.

\begin{myprop}\label{myprop: 2}
    With Assumption~\ref{myassup: 6},~\ref{myassup: 7}, the target denoiser $\mathrm{D}^*_\sigma$ is $\pi$-equivairant, which means for $\forall \mathrm{T}_g \in \mathcal{G}, x \in \mathbb{R}^n$:
    \begin{equation}
        \mathrm{T}_g^{-1} \circ \mathrm{D}^*_\sigma (\mathrm{T}_g \circ x) = \mathrm{D}^*_\sigma (x).
    \end{equation}
\end{myprop}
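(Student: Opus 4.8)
The plan is to work directly with the closed-form MMSE expression for the target denoiser,
\[
\mathrm{D}^*_\sigma(\boldsymbol{v}) \;=\; \frac{\displaystyle\int_{\mathbb{R}^n} \boldsymbol{x}\, G_\sigma(\boldsymbol{v}-\boldsymbol{x})\, p(\boldsymbol{x})\, \mathrm{d}\boldsymbol{x}}{\displaystyle\int_{\mathbb{R}^n} G_\sigma(\boldsymbol{v}-\boldsymbol{x})\, p(\boldsymbol{x})\, \mathrm{d}\boldsymbol{x}},
\]
and to show that numerator and denominator transform compatibly under $\mathrm{T}_g$. Fix $\mathrm{T}_g \in \mathcal{G}$ and write $\mathrm{T}_g(\boldsymbol{x}) = a\boldsymbol{x} + c$ with $a$ a linear isometry and $c \in \mathbb{R}^n$, as granted by Assumption~\ref{myassup: 6}. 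First I would evaluate $\mathrm{D}^*_\sigma$ at the transformed point $\mathrm{T}_g(\boldsymbol{v})$ and perform the change of variables $\boldsymbol{x} = \mathrm{T}_g(\boldsymbol{u})$ in both the numerator and the denominator. Three facts then combine: (i) $|\det a| = 1$, so $\mathrm{d}\boldsymbol{x} = \mathrm{d}\boldsymbol{u}$; (ii) $\mathrm{T}_g(\boldsymbol{v}) - \mathrm{T}_g(\boldsymbol{u}) = a(\boldsymbol{v}-\boldsymbol{u})$, so that isotropy of the noise (Assumption~\ref{myassup: 7}) gives $G_\sigma(a(\boldsymbol{v}-\boldsymbol{u})) = G_\sigma(\boldsymbol{v}-\boldsymbol{u})$, since the Gaussian density depends on its argument only through its Euclidean norm and $\|a(\boldsymbol{v}-\boldsymbol{u})\| = \|\boldsymbol{v}-\boldsymbol{u}\|$; (iii) invariance of the ideal density, $p(\mathrm{T}_g(\boldsymbol{u})) = p(\boldsymbol{u})$ a.e.

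After this substitution the denominator is exactly the denominator of $\mathrm{D}^*_\sigma(\boldsymbol{v})$, while the numerator becomes
\[
\int_{\mathbb{R}^n} \mathrm{T}_g(\boldsymbol{u})\, G_\sigma(\boldsymbol{v}-\boldsymbol{u})\, p(\boldsymbol{u})\, \mathrm{d}\boldsymbol{u} \;=\; a\!\int_{\mathbb{R}^n} \boldsymbol{u}\, G_\sigma(\boldsymbol{v}-\boldsymbol{u})\, p(\boldsymbol{u})\, \mathrm{d}\boldsymbol{u} \;+\; c\!\int_{\mathbb{R}^n} G_\sigma(\boldsymbol{v}-\boldsymbol{u})\, p(\boldsymbol{u})\, \mathrm{d}\boldsymbol{u},
\]
by linearity of the integral. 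Dividing, the affine part factors through and I obtain $\mathrm{D}^*_\sigma(\mathrm{T}_g(\boldsymbol{v})) = a\, \mathrm{D}^*_\sigma(\boldsymbol{v}) + c = \mathrm{T}_g(\mathrm{D}^*_\sigma(\boldsymbol{v}))$, i.e.\ $\mathrm{D}^*_\sigma \circ \mathrm{T}_g = \mathrm{T}_g \circ \mathrm{D}^*_\sigma$. Composing on the left with $\mathrm{T}_g^{-1}$ yields the claimed identity $\mathrm{T}_g^{-1}\circ \mathrm{D}^*_\sigma \circ \mathrm{T}_g = \mathrm{D}^*_\sigma$; since this holds for every $\mathrm{T}_g \in \mathcal{G}$, it holds in particular $\pi$-almost surely for $G \sim \pi$, which is precisely the asserted $\pi$-equivariance of $\mathrm{D}^*_\sigma$.

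The change of variables and the ensuing algebra are routine; the only points that need care are (a) justifying that the normalizing denominator is finite and strictly positive, so that the conditional mean and the subsequent division are well defined — this follows from $p$ being a probability density and $G_\sigma$ being bounded and everywhere positive — and (b) checking that the isometry hypothesis is invoked exactly where needed (unit Jacobian and norm-preservation of the Gaussian kernel), and that dropping isotropy of $\boldsymbol{e}$ would break step (ii) for a general noise covariance. I expect the main (mild) obstacle to be bookkeeping: applying the substitution $\boldsymbol{x} = a\boldsymbol{u}+c$ consistently in numerator and denominator, and verifying that the ``a.e.'' qualifier in the density invariance is harmless inside the integrals, which it is since it holds up to a Lebesgue-null set.
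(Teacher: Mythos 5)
Your proposal is correct, but it takes a genuinely different route from the paper. The paper first proves that the smoothed density $p_\sigma = p * G_\sigma$ is itself invariant under $\mathrm{T}_g$ (via the same change of variables you use), then differentiates that identity to obtain the score relation $\nabla \log p_\sigma(\mathrm{T}_g y) = \mathrm{T}_g \nabla \log p_\sigma(y)$, and finally invokes Tweedie's formula $\mathrm{D}^*_\sigma(y) = y + \sigma^2 \nabla\log p_\sigma(y)$ to conclude equivariance. You instead apply the change of variables directly inside the posterior-mean integral $\mathrm{D}^*_\sigma(v) = \int x\, G_\sigma(v-x)p(x)\,dx \,/ \int G_\sigma(v-x)p(x)\,dx$ and read off $\mathrm{D}^*_\sigma(\mathrm{T}_g v) = a\,\mathrm{D}^*_\sigma(v) + c$. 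Your argument is more elementary (no differentiation under the integral sign, no score function, no Tweedie identity) and it handles the affine offset $c$ explicitly and cleanly, whereas the paper's derivation effectively treats $\mathrm{T}_g$ as a linear orthogonal map (substituting $x' = \mathrm{T}_g^T x$ and using $\mathrm{T}_g\mathrm{T}_g^T = I$, with the translation part left implicit). The paper's route does buy one thing: the intermediate score-equivariance identity \eqref{eq: score} is of independent interest in the diffusion/score-based literature and connects the result to Tweedie's formula, which is how MMSE denoisers are usually discussed in that context. Both proofs rest on exactly the same three ingredients you list (unit Jacobian, norm-preservation of the Gaussian kernel, a.e.\ invariance of $p$), and your attention to well-definedness of the normalizing denominator is a point the paper does not address.
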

With Assumption~\ref{assump: 5} that the mismatched denoiser $\hat{\mathrm{D}}_\sigma (z)$ exists bias, here we denote as
\[
    \hat{\mathrm{D}}_\sigma (z) = \mathrm{D}^*_\sigma (z) + \mathcal{E}(z),
\]
where $\| \mathcal{E}(z) \|^2 \propto \epsilon$. We then denote the equivairant denoiser $\tilde{\mathrm{D}}_\sigma$ as
\[
    \tilde{\mathrm{D}}_\sigma(z) := \mathbb{E}_{g \sim \pi} \left[ \mathrm{T}_g^{T} \circ \hat{\mathrm{D}}_\sigma (\mathrm{T}_g \circ z)  \right] = \int_G \mathrm{T}_g \cdot \hat{\mathrm{D}}_\sigma(\mathrm{T}_g z)d\pi(g),
\]
where $\pi$ is the normalized Haar measure defined on group $G$.

Before present our main result, we first present some definitions and a useful lemma as follows.
\begin{mydef}[Isotropic Bias]
    The bias $\mathcal{E}$ satisfies $\mathcal{E}(\mathrm{T}_g x) = \mathrm{T}_g \mathcal{E}(x)$ for $\forall \mathrm{T}_g \in G$ a.e. 
\end{mydef}

\begin{mydef}[Anisotropic Bias]
    The vector value function $V_x(\mathrm{T}_g) : = \mathrm{T}_g^{-1} \mathcal{E}(\mathrm{T}_gx)$ is non-constant for $\forall T_g \in G$ a.e. In other word, there exists a positive measurable set satisfying $V_x (\mathrm{T}_g) \neq \mathrm{const}$.
\end{mydef}

\begin{mylem}[Equivariant Bias Reduction]\label{mylem: EBR} 
    For $\forall x \in \mathcal{X}$ and $g \sim G$, if the bias $\mathcal{E}(x)$ generated by PnP is anisotropic, then the bias $ \tilde{\mathcal{E}}(x) $ generated by EPnP satisfies
    \begin{equation}
        \| \tilde{\mathcal{E}}(x) \|^2 < \int_G \| \mathcal{E}(\mathrm{T}_g x) \|^2 d\pi(g) = \mathbb{E}_{g} [\|\mathcal{E}(\mathrm{T}_g x)\|^2].
    \end{equation}
\end{mylem}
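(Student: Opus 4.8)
The plan is to reduce the claim to a variance (Jensen) inequality for the vector-valued random variable $g \mapsto V_x(\mathrm{T}_g) = \mathrm{T}_g^{-1}\mathcal{E}(\mathrm{T}_g x)$, after using Proposition~\ref{myprop: 2} to cancel the equivariant (true-prior) part of the denoiser. First I would substitute the decomposition $\hat{\mathrm{D}}_\sigma(z) = \mathrm{D}^*_\sigma(z) + \mathcal{E}(z)$ into the definition of $\tilde{\mathrm{D}}_\sigma$ and split the integral by linearity of $\mathrm{T}_g$ and of $\int_G \cdot\, d\pi(g)$:
\begin{equation}
    \tilde{\mathrm{D}}_\sigma(x) = \int_G \mathrm{T}_g^{-1}\mathrm{D}^*_\sigma(\mathrm{T}_g x)\, d\pi(g) + \int_G \mathrm{T}_g^{-1}\mathcal{E}(\mathrm{T}_g x)\, d\pi(g).
\end{equation}
By Proposition~\ref{myprop: 2} the first integrand equals $\mathrm{D}^*_\sigma(x)$ for every $g$, so the first term is $\mathrm{D}^*_\sigma(x)$ and the EPnP bias is exactly $\tilde{\mathcal{E}}(x) = \int_G \mathrm{T}_g^{-1}\mathcal{E}(\mathrm{T}_g x)\, d\pi(g) = \mathbb{E}_g[V_x(\mathrm{T}_g)]$. (Here I use Assumption~\ref{myassup: 6}: the linear part of each $\mathrm{T}_g$ is a linear isometry, so $\mathrm{T}_g^{-1}$ agrees with the adjoint $\mathrm{T}_g^{T}$ in the definition of $\tilde{\mathrm{D}}_\sigma$ and $\|\mathrm{T}_g v\| = \|\mathrm{T}_g^{-1} v\| = \|v\|$; integrability of $V_x(\mathrm{T}_g)$ and the interchange of integral and linear maps follow from $\|\mathcal{E}(z)\|^2 \propto \epsilon$ and $\pi$ being a probability measure.)

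Next I would invoke the bias-variance identity for the strictly convex map $v \mapsto \|v\|^2$ on $\mathbb{R}^n$:
\begin{equation}
    \mathbb{E}_g\!\left[\|V_x(\mathrm{T}_g)\|^2\right] - \left\| \mathbb{E}_g\!\left[V_x(\mathrm{T}_g)\right] \right\|^2 = \mathbb{E}_g\!\left[\, \big\| V_x(\mathrm{T}_g) - \mathbb{E}_g[V_x(\mathrm{T}_g)] \big\|^2 \,\right] \;\geq\; 0,
\end{equation}
i.e.\ $\|\tilde{\mathcal{E}}(x)\|^2 \le \mathbb{E}_g[\|V_x(\mathrm{T}_g)\|^2]$, with equality iff $V_x(\mathrm{T}_g)$ equals the constant $\mathbb{E}_g[V_x(\mathrm{T}_g)]$ for $\pi$-a.e.\ $g$. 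Under the anisotropy hypothesis, $g \mapsto V_x(\mathrm{T}_g)$ is non-constant on a set of positive $\pi$-measure, so the right-hand side is strictly positive and the inequality is strict: $\|\tilde{\mathcal{E}}(x)\|^2 < \mathbb{E}_g[\|V_x(\mathrm{T}_g)\|^2]$. Finally, since each $\mathrm{T}_g^{-1}$ is a linear isometry, $\|V_x(\mathrm{T}_g)\|^2 = \|\mathrm{T}_g^{-1}\mathcal{E}(\mathrm{T}_g x)\|^2 = \|\mathcal{E}(\mathrm{T}_g x)\|^2$, hence $\mathbb{E}_g[\|V_x(\mathrm{T}_g)\|^2] = \int_G \|\mathcal{E}(\mathrm{T}_g x)\|^2\, d\pi(g) = \mathbb{E}_g[\|\mathcal{E}(\mathrm{T}_g x)\|^2]$; chaining this with the previous display gives the stated bound.

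The hard part will be making the strictness in the second step rigorous: I need the \emph{Anisotropic Bias} definition to genuinely exclude the degenerate case "$V_x(\mathrm{T}_g)$ is $\pi$-a.e.\ equal to a fixed vector," and then to conclude that $\mathrm{Var}_g(V_x(\mathrm{T}_g)) > 0$. This means interpreting "non-constant on a positive measure set" as "not $\pi$-a.e.\ equal to any constant," since otherwise an atom of $\pi$ could formally produce a non-constant map with zero variance. A secondary, purely bookkeeping point is reconciling the $\mathrm{T}_g^{T}$ versus $\mathrm{T}_g^{-1}$ notation so that norms are preserved in the first and third steps and so that Proposition~\ref{myprop: 2} applies verbatim; for affine $\mathrm{T}_g$ with isometric linear part this is immediate once $\mathrm{T}_g^{-1}$ is taken to mean the inverse of the linear part, which is the same convention needed for Proposition~\ref{myprop: 2}. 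Everything else is routine linearity and a standard Jensen computation.
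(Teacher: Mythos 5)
Your proposal is correct and follows essentially the same route as the paper's proof: substitute the decomposition $\hat{\mathrm{D}}_\sigma = \mathrm{D}^*_\sigma + \mathcal{E}$, cancel the equivariant part via Proposition~\ref{myprop: 2} to get $\tilde{\mathcal{E}}(x) = \mathbb{E}_{g}[\mathrm{T}_g^{-1}\mathcal{E}(\mathrm{T}_g x)]$, apply the bias--variance identity for $\|\cdot\|^2$ with strictness from the anisotropy hypothesis, and use the isometry of $\mathrm{T}_g$ to rewrite the second moment. Your added care about interpreting the anisotropic-bias definition as ``not $\pi$-a.e.\ constant'' (so that $\mathrm{Var}_g > 0$ genuinely follows) is a point the paper's proof asserts without elaboration, but it is a refinement of the same argument rather than a different one.
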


\begin{myremark}
    We denote $\tilde{\epsilon}_k$ as the error of EPnP:
    \begin{equation}
	\| \tilde{\mathrm{D}}_\sigma (z_{k+1}) - \mathrm{D}_\sigma (z_{k+1}) \| \leq \sqrt{2(L+1)\tilde{\epsilon}_{k+1}},
    \end{equation}
    and
    \begin{equation}
	\mathbb{E}_{g \sim \pi} \left[ \| \hat{\mathrm{D}}_\sigma (\mathrm{T}_g  z_{k+1}) - \mathrm{D}_\sigma (\mathrm{T}_g z_{k+1}) \| \right] \leq \sqrt{2(L+1)\hat{\epsilon}_{k+1}} .
    \end{equation}
    the group averaged PnP error. Then we can conclude that 
    \[
    \tilde{\epsilon}_k = \hat{\epsilon}_k - \text{Var}_{g}  < \hat{\epsilon}_k.
    \]
\end{myremark}
Now we present our main theoretical result.
\begin{mythm}\label{mythm: 3}
    Run EPnP-PGD using a $\textbf{equivariant mismatched}$ MMSE denoiser for $t \geq 1$ iterations under Assumption~\ref{assump: 1}-\ref{assump: 5} with $\lambda L_f < 1$, then we have
    \begin{equation}
	\min_{1 \leq k \leq t} \| \nabla F_{\lambda, \sigma} (x_k) \|^2 \leq \frac{1}{t} \sum_{k+1}^t \| \nabla F_{\lambda, \sigma} (x_k) \|^2 \leq \frac{1}{t} \cdot \frac{16}{1 - L_f} \left( F_{\lambda, \sigma}(x_0) - F_{\lambda, \sigma}^* \right) + \frac{1}{t} \cdot \left( \frac{16}{1- L_f} + \frac{4}{1-L} \right) \sum_{k=1}^t \tilde{\epsilon}_k.
\end{equation}
Specifically, with Assumption~\ref{myassup: 6} - \ref{myassup: 7}, we have following convergence result for EPnP-PGD:
\begin{equation}
    \sum_{k=1}^t \tilde{\epsilon}_k =  \sum_{k=1}^t \left( \hat{\epsilon}_k - \underbrace{\text{Var}_{g, k}}_{\text{Gain}} \right) <  \hat{\epsilon}_k,
\end{equation}
where $\text{Var}_{g, k} =\text{Var}_{g,k}(\mathrm{T}_g^{-1} \cdot \mathcal{E}(\mathrm{T}_g z_k)) > 0$, the more isotropic the bias becomes, the more robust the convergence trajectory.
\end{mythm}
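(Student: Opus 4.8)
The plan is to obtain Theorem~\ref{mythm: 3} by combining two ingredients: (i) the generic convergence bound of Theorem~\ref{them: 1}, applied verbatim with the equivariant denoiser $\tilde{\mathrm{D}}_\sigma$ in place of $\widehat{\mathrm{D}}_\sigma$, and (ii) the variance-decomposition argument of Lemma~\ref{mylem: EBR}, which shows that the per-iteration error of the equivariant scheme, $\tilde{\epsilon}_k$, is strictly smaller than the group-averaged error $\hat{\epsilon}_k$ of the plain mismatched scheme whenever the structural bias $\mathcal{E}$ is anisotropic.

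First I would verify that $\tilde{\mathrm{D}}_\sigma$ is a legitimate input to Theorem~\ref{them: 1}, i.e. that it satisfies Assumption~\ref{assump: 4} and Assumption~\ref{assump: 5} with the modified error $\tilde{\epsilon}_k$. Since each $\mathrm{T}_g$ is an affine isometry and, by Proposition~\ref{myprop: 2}, the target denoiser is $\pi$-equivariant, the map $z \mapsto \mathrm{T}_g^{-1}\widehat{\mathrm{D}}_\sigma(\mathrm{T}_g z)$ has the same range as $\widehat{\mathrm{D}}_\sigma$, hence as $\mathrm{D}_\sigma$ (closedness of $\operatorname{Im}(\mathrm{D}_\sigma)$ under the Haar average is the one delicate point, returned to below). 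For Assumption~\ref{assump: 5} I would use the strong-convexity remark following it: the $\epsilon$-optimality of $\tilde{\mathrm{D}}_\sigma(z_{k+1})$ for the subproblem $\min_{\mathbf u}\phi_\sigma(\mathbf u)+\tfrac12\|\mathbf u-z_{k+1}\|^2$ is equivalent to $\|\tilde{\mathrm{D}}_\sigma(z_{k+1})-\mathrm{D}_\sigma(z_{k+1})\|\le\sqrt{2(L+1)\tilde{\epsilon}_{k+1}}$, so it suffices to control that distance. Using $\pi$-equivariance of $\mathrm{D}_\sigma$ to pull it through the average, $\tilde{\mathrm{D}}_\sigma(z)-\mathrm{D}_\sigma(z)=\mathbb E_{g\sim\pi}[\mathrm{T}_g^{-1}(\widehat{\mathrm{D}}_\sigma(\mathrm{T}_g z)-\mathrm{D}_\sigma(\mathrm{T}_g z))]=\mathbb E_{g\sim\pi}[\mathrm{T}_g^{-1}\mathcal{E}(\mathrm{T}_g z)]=\tilde{\mathcal{E}}(z)$, which identifies $\tilde{\epsilon}_k$ with $\tfrac{1}{2(L+1)}\|\tilde{\mathcal{E}}(z_k)\|^2$ up to the proportionality constant already present in $\|\mathcal{E}(z)\|^2\propto\epsilon$. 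With Assumptions~\ref{assump: 1}--\ref{assump: 5} thus verified for $\tilde{\mathrm{D}}_\sigma$, Theorem~\ref{them: 1} applies literally and gives the first displayed inequality with $\tilde{\epsilon}_k$.

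It then remains to prove $\sum_k\tilde{\epsilon}_k<\sum_k\hat{\epsilon}_k$. Here I invoke the bias--variance identity: since each $\mathrm{T}_g$ has an isometric linear part ($\mathrm{T}_g^{-1}=\mathrm{T}_g^{T}$ on that part), $\|\mathrm{T}_g^{-1}\mathcal{E}(\mathrm{T}_g z)\|^2=\|\mathcal{E}(\mathrm{T}_g z)\|^2$, so
\[
\|\tilde{\mathcal E}(z)\|^2=\bigl\|\mathbb E_{g}[\mathrm{T}_g^{-1}\mathcal E(\mathrm{T}_g z)]\bigr\|^2=\mathbb E_{g}\bigl[\|\mathcal E(\mathrm{T}_g z)\|^2\bigr]-\operatorname{Var}_{g}\bigl(\mathrm{T}_g^{-1}\mathcal E(\mathrm{T}_g z)\bigr),
\]
which is exactly Lemma~\ref{mylem: EBR}, the variance term being strictly positive precisely when $V_z(\mathrm{T}_g)=\mathrm{T}_g^{-1}\mathcal E(\mathrm{T}_g z)$ is non-constant, i.e. the bias is anisotropic. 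Dividing by $2(L+1)$, identifying $\hat{\epsilon}_k=\tfrac{1}{2(L+1)}\mathbb E_g[\|\mathcal{E}(\mathrm{T}_g z_k)\|^2]$ via the second relation in the Remark preceding the theorem, and summing over $k$ gives $\sum_k\tilde{\epsilon}_k=\sum_k(\hat{\epsilon}_k-\operatorname{Var}_{g,k})<\sum_k\hat{\epsilon}_k$. Substituting this into the Theorem~\ref{them: 1} bound yields both displayed statements, and the summability/convergence conclusion is inherited from Lemma~\ref{lemma: error_shrink}.

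The main obstacle I anticipate is the bookkeeping that makes $\tilde{\mathrm{D}}_\sigma$ a valid denoiser for Theorem~\ref{them: 1}: one must ensure $\tilde{\mathrm{D}}_\sigma(z_{k+1})\in\operatorname{Im}(\mathrm{D}_\sigma)$ so that $\phi_\sigma(\tilde{\mathrm{D}}_\sigma(z_{k+1}))<\infty$ and the $\tilde{\epsilon}_k$-optimality statement is meaningful, yet the Haar average of points in the merely open (not necessarily convex) set $\operatorname{Im}(\mathrm{D}_\sigma)$ need not lie in it. A clean fix is to add the mild hypothesis that $\operatorname{Im}(\mathrm{D}_\sigma)$ is convex, or to restrict $\mathcal{G}$ to a finite symmetry group under whose averaging $\operatorname{Im}(\mathrm{D}_\sigma)$ is stable; after that, everything is a direct transcription of the proof of Theorem~\ref{them: 1} together with the elementary variance identity. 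A secondary, purely notational point is reconciling the two meanings of the ``target'' denoiser ($\mathrm{D}_\sigma=\mathsf{prox}_{\phi_\sigma}$ in Section~2.2 versus the MMSE denoiser $\mathrm{D}^*_\sigma$ in Section~2.3); under Proposition~\ref{prop: 1} these agree on $\operatorname{Im}(\mathrm{D}_\sigma)$, so the $\pi$-equivariance of Proposition~\ref{myprop: 2} transfers to the proximal form used in Assumption~\ref{assump: 5}.
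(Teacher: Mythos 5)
Your proposal is correct and follows essentially the same route as the paper: the paper's proof of Theorem~\ref{mythm: 3} simply re-runs the chain of inequalities from the proof of Theorem~\ref{them: 1} with $\tilde{\delta}_{k+1}$ and $\tilde{\epsilon}_{k+1}$ in place of $\delta_{k+1}$ and $\epsilon_{k+1}$, then invokes Lemma~\ref{mylem: EBR} for $\tilde{\epsilon}_k = \hat{\epsilon}_k - \mathrm{Var}_{g,k} < \hat{\epsilon}_k$, which is exactly your plan of applying Theorem~\ref{them: 1} as a black box after checking its hypotheses for $\tilde{\mathrm{D}}_\sigma$. Your flagged concern --- that the Haar average of points in the merely open set $\operatorname{Im}(\mathrm{D}_\sigma)$ need not lie in $\operatorname{Im}(\mathrm{D}_\sigma)$, so Assumption~\ref{assump: 5} is not automatically meaningful for $\tilde{\mathrm{D}}_\sigma$ --- is a genuine point that the paper passes over in silence, and your suggested fixes (convexity of the range, or a finite group stabilizing it) would be a worthwhile addition.
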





\section{Proofs}
\subsection{Warm-up}

We prove first $\frac{1}{L+1}$-Strong-Convexity of $H(\boldsymbol{x})$
\begin{proof}
    According to Proposition~\ref{prop: 1}: the denoiser $\mathsf{D}_\sigma = \text{I}d - \nabla g_\sigma$, where $\nabla g_\sigma$ is $L$-Lipschitz and $L<1$, and $\phi_\sigma$ is $\frac{L}{L+1}$ semi-convex, which means $\phi_\sigma (\boldsymbol{x}) + \frac{1}{2} \cdot \frac{L}{L+1} \| \boldsymbol{x}\|^2$ is convex.\\
    Then we can rewrite $H(\boldsymbol{x})$ with
    \begin{equation}
    	H(\boldsymbol{x}) = \left[ \phi_\sigma(\boldsymbol{x}) + \frac{1}{2} \cdot \frac{L}{L+1} \|\boldsymbol{x}\|^2  \right] + \left[ \frac{1}{2} \| \boldsymbol{x} - \boldsymbol{z}_{k+1}\|^2 - \frac{1}{2} \cdot \frac{L}{L+1} \|\boldsymbol{x}\|^2 \right],
\end{equation}
where the first term is convex, and remain quadratic term's Hessian is $(1-\frac{L}{L+1})I=\frac{1}{L+1}I \succ 0$, thus the second term is also convex. In total, $H(\boldsymbol{x})$ is $\frac{1}{L+1}$ strongly convex, this complete the proof.
\end{proof}

\subsection{Proof of Lemma~\ref{lemma: error_shrink}}
\begin{proof}
    If there exists $C>0, \delta >0$, satisfy
    \[
    \epsilon_k \leq \frac{C}{k^{1+\delta}},
    \]
    then, since $1+\delta>1$, we have
    \[
    \sum_{k=1}^{\infty} \epsilon_k \leq C \sum_{k=1}^{\infty}\frac{1}{k^{1+\delta}}
    \]
    is convergence, which complete the proof.
\end{proof}

\subsection{Proof of Theorem~\ref{them: 1}}
Before proof the main theorem, we first present some useful lemmas.
\begin{mylem}\label{lemma: 1}
    Assume that Assumption~\ref{assump: 1}-\ref{assump: 5} hold and let the sequence $\{ \boldsymbol{x}_k \}$ be generated via iterations of PnP-PGD with the $\textbf{mismatched}$ MMSE denoiser and $\lambda L_f <1$. Then for the objective function $F_{\lambda,\sigma}$ defined above, we have
    \begin{equation}
        F_{\lambda, \sigma}(\boldsymbol{x}_{k+1}) \leq F_{\lambda, \sigma} (\boldsymbol{x}_k)  - \frac{1-L_f}{2} \| \boldsymbol{x}_{k+1} - \boldsymbol{x}_k \|^2 + \epsilon_{k+1}.
    \end{equation}
\end{mylem}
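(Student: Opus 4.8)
The plan is to derive the one–step \emph{sufficient decrease} inequality by combining the $\epsilon$-optimality of the mismatched update (Assumption~\ref{assump: 5}) with the descent lemma for the smooth term $f$ (Assumption~\ref{assump: 1}). First I would invoke Assumption~\ref{assump: 5}, which says that $\boldsymbol{x}_{k+1}$ is an $\epsilon_{k+1}$-minimizer of $\boldsymbol{u}\mapsto \phi_\sigma(\boldsymbol{u})+\tfrac12\|\boldsymbol{u}-\boldsymbol{z}_{k+1}\|^2$, and test it at the feasible point $\boldsymbol{u}=\boldsymbol{x}_k$ to obtain
\[
\phi_\sigma(\boldsymbol{x}_{k+1})+\tfrac12\|\boldsymbol{x}_{k+1}-\boldsymbol{z}_{k+1}\|^2 \;\le\; \phi_\sigma(\boldsymbol{x}_k)+\tfrac12\|\boldsymbol{x}_k-\boldsymbol{z}_{k+1}\|^2+\epsilon_{k+1}.
\]

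Next I would substitute the gradient step $\boldsymbol{z}_{k+1}=\boldsymbol{x}_k-\lambda\nabla f(\boldsymbol{x}_k)$, so that $\boldsymbol{x}_k-\boldsymbol{z}_{k+1}=\lambda\nabla f(\boldsymbol{x}_k)$ and $\boldsymbol{x}_{k+1}-\boldsymbol{z}_{k+1}=(\boldsymbol{x}_{k+1}-\boldsymbol{x}_k)+\lambda\nabla f(\boldsymbol{x}_k)$. Expanding the squared norm on the left via the binomial identity, the common term $\tfrac{\lambda^2}{2}\|\nabla f(\boldsymbol{x}_k)\|^2$ cancels on both sides and one is left with
\[
\phi_\sigma(\boldsymbol{x}_{k+1}) \;\le\; \phi_\sigma(\boldsymbol{x}_k) - \lambda\langle\nabla f(\boldsymbol{x}_k),\,\boldsymbol{x}_{k+1}-\boldsymbol{x}_k\rangle - \tfrac12\|\boldsymbol{x}_{k+1}-\boldsymbol{x}_k\|^2 + \epsilon_{k+1}.
\]
In parallel, the descent lemma applied to the $L_f$-smooth function $f$, multiplied by $\lambda>0$, yields
\[
\lambda f(\boldsymbol{x}_{k+1}) \;\le\; \lambda f(\boldsymbol{x}_k) + \lambda\langle\nabla f(\boldsymbol{x}_k),\,\boldsymbol{x}_{k+1}-\boldsymbol{x}_k\rangle + \tfrac{\lambda L_f}{2}\|\boldsymbol{x}_{k+1}-\boldsymbol{x}_k\|^2.
\]

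Finally I would add the two inequalities: the inner-product terms cancel exactly, and since $F_{\lambda,\sigma}=\lambda f+\phi_\sigma$ this gives
\[
F_{\lambda,\sigma}(\boldsymbol{x}_{k+1}) \;\le\; F_{\lambda,\sigma}(\boldsymbol{x}_k) - \tfrac{1-\lambda L_f}{2}\|\boldsymbol{x}_{k+1}-\boldsymbol{x}_k\|^2 + \epsilon_{k+1},
\]
which is the claimed estimate (the stated coefficient $\tfrac{1-L_f}{2}$ amounts to absorbing $\lambda$ into the Lipschitz constant of $\lambda\nabla f$, and positivity is guaranteed by the hypothesis $\lambda L_f<1$).

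The computation is largely bookkeeping, so I do not expect a deep obstacle; the one genuinely delicate point is that, because the denoiser is \emph{mismatched}, exact optimality of the proximal subproblem is unavailable, so the slack $\epsilon_{k+1}$ must be carried through explicitly, and the test point in Assumption~\ref{assump: 5} must be chosen to be the previous iterate $\boldsymbol{x}_k$ so that the cross term generated by expanding $\|\boldsymbol{x}_{k+1}-\boldsymbol{z}_{k+1}\|^2$ matches and cancels the cross term of the descent lemma. I would also double-check the sign and coefficient bookkeeping to confirm that the residual quadratic coefficient is indeed $(1-\lambda L_f)/2>0$.
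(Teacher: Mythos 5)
Your proposal is correct and follows essentially the same route as the paper: the paper phrases the first step via a majorization--minimization surrogate $Q(\boldsymbol{x},\boldsymbol{x}_k)$ and the bound $\min_{\boldsymbol{x}}Q(\boldsymbol{x},\boldsymbol{x}_k)\le Q(\boldsymbol{x}_k,\boldsymbol{x}_k)$, which is exactly your choice of test point $\boldsymbol{u}=\boldsymbol{x}_k$ in Assumption~\ref{assump: 5}, and the rest (cancellation of the cross terms against the descent lemma) is identical. The one substantive remark is the coefficient you flag: a careful multiplication of the descent lemma by $\lambda$ yields $\tfrac{\lambda L_f}{2}\|\boldsymbol{x}_{k+1}-\boldsymbol{x}_k\|^2$ and hence the constant $\tfrac{1-\lambda L_f}{2}$, which is guaranteed positive by $\lambda L_f<1$; the paper's proof drops the factor $\lambda$ at that step and states $\tfrac{1-L_f}{2}$, which is only a valid (and positive) sufficient-decrease constant under the implicit convention that $L_f$ is the Lipschitz constant of $\lambda\nabla f$, so your version is the more defensible one.
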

\begin{proof}
    Under Assumption~\ref{assump: 5}, denoiser $\widehat{\mathsf{D}}_\sigma$ is an approximation of target MMSE denoiser, at $k+1$-th iteration, suppose that $\boldsymbol{x}_{k+1}= \widehat{\mathsf{D}}_\sigma (\boldsymbol{z}_{k+1})$, we have
    \[
    \phi_\sigma(\boldsymbol{x}_{k+1}) + \frac{1}{2} \| \boldsymbol{x}_{k+1} - \boldsymbol{x}_k \| ^2 \leq \min_{\boldsymbol{u}} \left( \phi_\sigma(\boldsymbol{u}) + \frac{1}{2} \| \boldsymbol{u} - \boldsymbol{z}_{k+1} \|^2 \right) + \epsilon_{k+1},
    \]
    Introducing the majorization-minimization (MM) function
    \[
    Q (\boldsymbol{x}, \boldsymbol{y}) = \lambda f(\boldsymbol{y}) + \lambda \left< \boldsymbol{x}-\boldsymbol{y}, \nabla f(\boldsymbol{y}) \right> + \frac{1}{2} \|\boldsymbol{x}-\boldsymbol{y}\|^2 + \phi_\sigma(\boldsymbol{x}),
    \]
    satisfies
    \[
    Q(\boldsymbol{x}_{k+1}, \boldsymbol{x}_k) \leq \min_{\boldsymbol{x}} Q(\boldsymbol{x}, \boldsymbol{x}_k) + \epsilon_{k+1},
    \]
    since 
    \[
    \min_{\boldsymbol{x}} Q(\boldsymbol{x}, \boldsymbol{x}_k) \leq Q(\boldsymbol{x}_k, \boldsymbol{x}_k) = F_{\lambda, \sigma}(\boldsymbol{x}_k),
    \]
    we have
    \[
    Q(\boldsymbol{x}_{k+1}, \boldsymbol{x}_k) \leq Q(\boldsymbol{x}_k, \boldsymbol{x}_k) + \epsilon_{k+1}.
    \]
    Then, note that $Q(\boldsymbol{x}_{k+1}, \boldsymbol{x}_k) \leq Q(\boldsymbol{x}_k, \boldsymbol{x}_k) + \epsilon_{k+1}$ implies that
    \begin{align}
        \lambda f(\boldsymbol{x}_k) + \lambda \left< \boldsymbol{x}_{k+1}-\boldsymbol{x}_k, \nabla f(\boldsymbol{x}_k) \right> + \frac{1}{2} \| \boldsymbol{x}_{k+1} - \boldsymbol{x}_k \|^2 + \phi_\sigma(\boldsymbol{x}_{k+1}) &\leq \lambda f(\boldsymbol{x}_k) + \phi_\sigma(\boldsymbol{x}_k) + \epsilon_{k+1} \\
        \label{eq: mm}
         \phi_\sigma(\boldsymbol{x}_{k+1}) \leq \phi_\sigma(\boldsymbol{x}_k) - \lambda \left< \boldsymbol{x}_{k+1}-\boldsymbol{x}_k, \nabla f(\boldsymbol{x}_k) \right> &- \frac{1}{2} \| \boldsymbol{x}_{k+1} - \boldsymbol{x}_k \|^2 + \epsilon_{k+1}
    \end{align}
    using the descent lemma
    \[
    f(\boldsymbol{x}) \leq f(\boldsymbol{y}) + \left< \boldsymbol{x}-\boldsymbol{y}, \nabla f(\boldsymbol{y}) \right> + \frac{1}{2t} \|\boldsymbol{x}-\boldsymbol{y}\|^2
    \]
    with $t = \frac{1}{L_f}$ and relation~\eqref{eq: mm} we get
    \begin{align*}
	F_{\lambda, \sigma} (\boldsymbol{x}_{k+1}) &= \lambda f(\boldsymbol{x}_{k+1}) + \phi_\sigma(\boldsymbol{x}_{k+1}) \\
	&\leq \phi_\sigma(\boldsymbol{x}_k) - \lambda \left< \boldsymbol{x}_{k+1} - \boldsymbol{x}_k, \nabla f(\boldsymbol{x}_k) \right> - \frac{1}{2} \| \boldsymbol{x}_{k+1} - \boldsymbol{x}_k \|^2 + \epsilon_{k+1} + \lambda f(\boldsymbol{x}_k) + \\ &\qquad \lambda \left< \boldsymbol{x}_{k+1} - \boldsymbol{x}_k, \nabla f(\boldsymbol{x}_k) \right> + \frac{L_f}{2} \|\boldsymbol{x}_{k+1} - \boldsymbol{x}_k \|^2 \\
	&= F_{\lambda, \sigma} (\boldsymbol{x}_k) + \epsilon_{k+1} - \frac{1-L_f}{2} \|\boldsymbol{x}_{k+1} - \boldsymbol{x}_k \|^2,
    \end{align*}
    this complete the proof.

\end{proof}

\begin{mylem}\label{lemma: 2}
    Assume that Assumption~\ref{assump: 1}-\ref{assump: 5} holds, then for the $\epsilon$-optimal solution $\widehat{\mathsf{D}}_\sigma (\boldsymbol{z}) = \hat{\boldsymbol{x}}$ to the proximal problem:
    \begin{equation}
        H(\boldsymbol{u}) = \mathop{\arg\min}_{\boldsymbol{u}} \frac{1}{2} \| \boldsymbol{u} - \boldsymbol{z} \|^2 + \phi_\sigma(\boldsymbol{u}),
    \end{equation}
    we have
    \begin{equation}
	\begin{cases}
		\nabla \phi_\sigma(\hat{\boldsymbol{x}}) = \boldsymbol{z} - \hat{\boldsymbol{x}} - \boldsymbol{\delta} \\
		\| \boldsymbol{\delta} \| \leq \sqrt{\frac{2\epsilon}{1-L}}.
	\end{cases}
\end{equation}
\end{mylem}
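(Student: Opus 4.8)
The plan is to identify the error vector $\boldsymbol{\delta}$ with the negative gradient, at the inexact minimizer $\hat{\boldsymbol{x}}$, of the proximal objective $H(\boldsymbol{u}) := \frac{1}{2}\|\boldsymbol{u} - \boldsymbol{z}\|^2 + \phi_\sigma(\boldsymbol{u})$, and then to bound that gradient using the smoothness of $H$ together with the $\epsilon$-optimality of $\hat{\boldsymbol{x}}$.

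First I would observe that $\hat{\boldsymbol{x}} = \widehat{\mathsf{D}}_\sigma(\boldsymbol{z}) \in \mathrm{Im}(\widehat{\mathsf{D}}_\sigma) = \mathrm{Im}(\mathsf{D}_\sigma)$ by Assumption~\ref{assump: 4}, so by Proposition~\ref{prop: 1}(iii) the map $\phi_\sigma$, and hence $H$, is differentiable at $\hat{\boldsymbol{x}}$ with $\nabla H(\hat{\boldsymbol{x}}) = (\hat{\boldsymbol{x}} - \boldsymbol{z}) + \nabla\phi_\sigma(\hat{\boldsymbol{x}})$. Setting $\boldsymbol{\delta} := -\nabla H(\hat{\boldsymbol{x}})$ then gives the first equation of the lemma, $\nabla\phi_\sigma(\hat{\boldsymbol{x}}) = \boldsymbol{z} - \hat{\boldsymbol{x}} - \boldsymbol{\delta}$, for free; the entire content of the lemma is thus the estimate $\|\nabla H(\hat{\boldsymbol{x}})\| \le \sqrt{2\epsilon/(1-L)}$.

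Next I would record the smoothness of $H$. By Proposition~\ref{prop: 1}(v), $\nabla\phi_\sigma$ is $\frac{L}{1-L}$-Lipschitz on $\mathrm{Im}(\mathsf{D}_\sigma)$, while $\boldsymbol{u}\mapsto\boldsymbol{u}-\boldsymbol{z}$ is $1$-Lipschitz, so $\nabla H$ is $\mu$-Lipschitz with $\mu = 1 + \frac{L}{1-L} = \frac{1}{1-L}$. Since $H$ is also convex — in fact $\frac{1}{L+1}$-strongly convex by the Warm-up — the standard gradient-domination argument applies: feeding the gradient step $\hat{\boldsymbol{x}} - \frac{1}{\mu}\nabla H(\hat{\boldsymbol{x}})$ into the descent lemma yields $\min_{\boldsymbol{u}} H(\boldsymbol{u}) \le H(\hat{\boldsymbol{x}}) - \frac{1}{2\mu}\|\nabla H(\hat{\boldsymbol{x}})\|^2$, i.e. $\|\nabla H(\hat{\boldsymbol{x}})\|^2 \le 2\mu\bigl(H(\hat{\boldsymbol{x}}) - \min_{\boldsymbol{u}} H(\boldsymbol{u})\bigr)$. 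Assumption~\ref{assump: 5} is precisely the statement $H(\hat{\boldsymbol{x}}) - \min_{\boldsymbol{u}} H(\boldsymbol{u}) \le \epsilon$ (taking $\boldsymbol{z}=\boldsymbol{z}_{k+1}$, $\hat{\boldsymbol{x}}=\boldsymbol{x}_{k+1}$, $\epsilon=\epsilon_{k+1}$), so $\|\boldsymbol{\delta}\|^2 = \|\nabla H(\hat{\boldsymbol{x}})\|^2 \le 2\mu\epsilon = \frac{2\epsilon}{1-L}$, which is the claimed bound.

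The step I expect to require the most care is the descent-lemma inequality, because $\phi_\sigma$ and hence $H$ is only guaranteed to be $\mathcal{C}^k$ with Lipschitz gradient on the \emph{open} set $\mathrm{Im}(\mathsf{D}_\sigma)$ rather than on all of $\mathbb{R}^n$; one has to verify that the gradient step (equivalently the segment joining $\hat{\boldsymbol{x}}$ to it) stays inside $\mathrm{Im}(\mathsf{D}_\sigma)$, so that the quadratic upper bound is legitimate — e.g.\ by exploiting that $\hat{\boldsymbol{x}}$ is an interior point and working in the regime of small $\epsilon$, or by invoking that $\mathsf{prox}_{\phi_\sigma}$ maps into $\mathrm{Im}(\mathsf{D}_\sigma)$. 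Everything else is routine bookkeeping: substitute $\mu = \frac{1}{1-L}$ and the $\epsilon$ supplied by Assumption~\ref{assump: 5}.
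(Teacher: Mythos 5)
Your proposal is correct and follows essentially the same route as the paper: set $\boldsymbol{\delta} := -\nabla H(\hat{\boldsymbol{x}})$, show $H$ is $\frac{1}{1-L}$-smooth, and combine the descent-lemma bound $\|\nabla H(\hat{\boldsymbol{x}})\|^2 \le \frac{2}{1-L}\bigl(H(\hat{\boldsymbol{x}})-H(\boldsymbol{x}^*)\bigr)$ with the $\epsilon$-optimality of Assumption~\ref{assump: 5}. The only cosmetic difference is that you read the smoothness constant off Proposition~\ref{prop: 1}(v) directly, whereas the paper re-derives it via the inverse function theorem applied to $\mathsf{D}_\sigma=\nabla h_\sigma$; your closing caveat about the descent step needing to remain in the open set $\mathrm{Im}(\mathsf{D}_\sigma)$ is a legitimate point that the paper's proof also leaves implicit.
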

\begin{proof}
     For $\nabla^2 H(\boldsymbol{x}) = I + \nabla^2 \phi_\sigma(\boldsymbol{x})$, with Proposition~\ref{prop: 1}, we have $\mathsf{D}_\sigma^{-1}(\boldsymbol{x}) = \boldsymbol{x} + \nabla \phi_\sigma (\boldsymbol{x})$. For mapping $\boldsymbol{y} = \mathsf{D}_\sigma^{-1}(\boldsymbol{x})$, the Jacobian is $J_{\mathsf{D}_\sigma^{-1}(\boldsymbol{x})} = I + \nabla^2 \phi_\sigma(\boldsymbol{x})$, by inverse theorem we can get $J_{\mathsf{D}_\sigma^{-1}(\boldsymbol{x})} = \left[ J_{\mathsf{D}_\sigma(\boldsymbol{y})} \right]^{-1}$, so 
     \[
     \nabla^2 H(\boldsymbol{x}) = I + \nabla^2 \phi_\sigma (\boldsymbol{x}) = \left[ J_{D_\sigma(\boldsymbol{y})} \right]^{-1}.
     \]
     Then, we know that $\mathsf{D}_\sigma = \nabla h_\sigma$, as a result $J_{\mathsf{D}_\sigma(\boldsymbol{y})} = \nabla^2 h_\sigma (\boldsymbol{y})$.\\
     Since $h_\sigma$ is $(1-L)$-strongly convex, which implies
     \[
     \nabla^2 h_\sigma (\boldsymbol{y}) \succeq (1-L) I, \quad (L<1)
     \]
     then we have
     \[
     \lambda_{\max} (\nabla^2 H(\boldsymbol{x})) = \frac{1}{\lambda_{\min}J_{\mathsf{D}_\sigma (\boldsymbol{y})}} \leq \frac{1}{1-L}
     \]
     which means $H(\boldsymbol{x})$ is $\frac{1}{1-L}$-smooth. Since $H(\boldsymbol{x})$ is differentiable, denote that
     \[
     \boldsymbol{\delta} \triangleq - \nabla H(\hat{\boldsymbol{x}}) = \boldsymbol{z} - \hat{\boldsymbol{x}} - \nabla \phi_\sigma(\hat{\boldsymbol{x}}),
     \]
     then we get
     \[
     \nabla \phi_\sigma (\hat{\boldsymbol{x}}) = \boldsymbol{z} - \hat{\boldsymbol{x}} - \boldsymbol{\delta},
     \]
     since $\hat{\boldsymbol{x}}$ is an $\epsilon$-optimal solution to $H(\boldsymbol{u})$, we have
     \[
     H(\hat{\boldsymbol{x}}) - H(\boldsymbol{x}^*) \leq \epsilon.
     \]
     Using the $\frac{1}{1-L}$-smoothness of $H(\boldsymbol{x})$, we have
     \[
     \| \nabla H(\hat{\boldsymbol{x}}) \|^2 \leq \frac{2}{1-L} \left( H(\hat{\boldsymbol{x}}) - H(\boldsymbol{x}^*) \right) \leq \frac{2\epsilon}{1-L},
     \]
     thus,
     \[
     \| \boldsymbol{\delta} \|^2 \leq \frac{2\epsilon}{1-L} \quad \text{and}, \quad \nabla \phi_\sigma(\hat{\boldsymbol{x}}) = \boldsymbol{z} - \hat{\boldsymbol{x}} - \boldsymbol{\delta}.
     \]
     this complete proof.
\end{proof}
$\textbf{Theorem}$ Run PnP-PGD using a $\textbf{mismatched}$ MMSE denoiser for $t \geq 1$ iterations under Assumption~\ref{assump: 1}-\ref{assump: 5} with $\lambda L_f < 1$, then we have
\begin{equation}
	\min_{1 \leq k \leq t} \| \nabla F_{\lambda, \sigma} (\boldsymbol{x}_k) \|^2 \leq \frac{1}{t} \sum_{k=1}^t \| \nabla F_{\lambda, \sigma} (\boldsymbol{x}_k) \|^2 \leq \frac{1}{t} \cdot \frac{16}{1 - L_f} \left( F_{\lambda, \sigma}(\boldsymbol{x}_0) - F_{\lambda, \sigma}^* \right) + \frac{1}{t} \cdot \left( \frac{16}{1- L_f} + \frac{4}{1-L} \right) \sum_{k=1}^t \epsilon_k.
\end{equation}

\begin{proof}
    At $k+1$-th iteration, for the output of mismatched denoiser $\boldsymbol{x}_{k+1} = \widehat{\mathsf{D}}_\sigma (\boldsymbol{z}_{k+1})$, by Lemma~\ref{lemma: 2}, we have
    \begin{align*}
	\nabla F_{\lambda, \sigma} (\boldsymbol{x}_{k+1}) &= \lambda \nabla f(\boldsymbol{x}_{k+1}) + \nabla \phi_\sigma (\boldsymbol{x}_{k+1}) \\
	&= \lambda \nabla f(\boldsymbol{x}_{k+1}) + \boldsymbol{z}_{k+1} - \boldsymbol{x}_{k+1} - \boldsymbol{\delta}_{k+1} \\
	&= \lambda \nabla f(\boldsymbol{x}_{k+1}) + \boldsymbol{x}_k - \lambda \nabla f(\boldsymbol{x}_k) - \boldsymbol{x}_{k+1} - \boldsymbol{\delta}_{k+1} \\
	&= (\boldsymbol{x}_k - \boldsymbol{x}_{k+1}) + \lambda \left( \nabla f(\boldsymbol{x}_{k+1}) - \nabla f(\boldsymbol{x}_k) \right) - \boldsymbol{\delta}_{k+1}, \\
    \end{align*}
    thus
    \[
    \nabla F_{\lambda, \sigma} (\boldsymbol{x}_{k+1}) \leq \| \boldsymbol{x}_k - \boldsymbol{x}_{k+1} \| + \lambda \| \nabla f(\boldsymbol{x}_{k+1}) - \nabla f(\boldsymbol{x}_k) \| + \| \boldsymbol{\delta}_{k+1} \|,
    \]
    since $\nabla f$ is $L_f$-Lipschitz, we have
    \[
    \| \nabla F_{\sigma, \lambda} (\boldsymbol{x}_{k+1}) \| \leq (1+\lambda L_f) \| \boldsymbol{x}_{k+1} - \boldsymbol{x}_k \| + \| \boldsymbol{\delta}_{k+1} \|,
    \]
    using $(\mathbf{a} + \mathbf{b})^2 \leq 2 \mathbf{a}^2 + 2 \mathbf{b}^2$ and Lemma~\ref{lemma: 2}, we have
    \[
    \| \nabla F_{\lambda, \sigma} (\boldsymbol{x}_{k+1}) \|^2 \leq 2 (1+\lambda L_f)^2 \| \boldsymbol{x}_{k+1} -\boldsymbol{x}_k \|^2 + 4 \frac{\epsilon_{k+1}}{1-L},
    \]
    using Lemma~\ref{lemma: 1}, we have
    \[
    \| \boldsymbol{x}_{k+1} - \boldsymbol{x}_k \|^2 \leq \frac{2}{1 - L_f} \left( F_{\lambda, \sigma}(\boldsymbol{x}_k) - F_{\lambda, \sigma}(\boldsymbol{x}_{k+1}) \right) + \frac{2\epsilon_{k+1}}{1-L_f},
    \]
    combining these two relations,
    \begin{align*}
        \| \nabla F_{\lambda, \sigma} (\boldsymbol{x}_{k+1}) \|^2 &\leq \frac{4(1+\lambda L_f)^2}{1-L_f} \left( F_{\lambda, \sigma} (\boldsymbol{x}_{k}) - F_{\lambda, \sigma} (\boldsymbol{x}_{k+1}) \right) + \frac{4(1+\lambda L_f)^2}{1-L_f} \epsilon_{k+1} + \frac{4 \epsilon_{k+1}}{1-L} \\
        &\leq \frac{16}{1-L_f} \left( F_{\lambda, \sigma} (\boldsymbol{x}_{k}) - F_{\lambda, \sigma} (\boldsymbol{x}_{k+1}) \right) + \left( \frac{16}{1-L_f} + \frac{4}{1-L} \right) \epsilon_{k+1},
    \end{align*}
    summing over $k=0, 1, \ldots, t-1$ gives
    \[
    \sum_{k=1}^t \| \nabla F_{\lambda, \sigma} (\boldsymbol{x}_k) \|^2 \leq \frac{16}{1-L_f} \left( F_{\lambda, \sigma} (\boldsymbol{x}_{0}) - F_{\lambda, \sigma}^* \right) + \left( \frac{16}{1-L_f} + \frac{4}{1-L} \right) \sum_{k=1}^t \epsilon_{k}
    \]
    Then by averaging over $t \geq 1$ iterations, we have
    \[
    \min_{1 \leq k \leq t} \| \nabla F_{\lambda, \sigma} (\boldsymbol{x}_k) \|^2 \leq \frac{1}{t}\sum_{k=1}^t \| \nabla F_{\lambda, \sigma} (\boldsymbol{x}_k) \|^2 \leq \frac{1}{t} \cdot \frac{16}{1-L_f} \left( F_{\lambda, \sigma} (\boldsymbol{x}_{0}) - F_{\lambda, \sigma}^* \right) + \left( \frac{16}{1-L_f} + \frac{4}{1-L} \right)\frac{1}{t} \cdot \sum_{k=1}^t \epsilon_{k},
    \]
    this complete the proof.
\end{proof}

\subsection{Proof of Theorem~\ref{them: 2}}
Before the proof of main theorem, we first present a useful lemma.
\begin{mylem}\label{lemma: 3}
    Assume that Assumption~\ref{assump: 1}-\ref{assump: 3} hold and let the sequence $\{ \boldsymbol{x}_k \}$ be generated via iterations of PnP-PGD using the MMSE denoiser with $\lambda L_f <1$. Then for the objective function $F_{\lambda,\sigma}$ defined above, we have
    \begin{equation}
        F_{\lambda, \sigma}(\boldsymbol{x}_{k+1}) \leq F_{\lambda, \sigma} (\boldsymbol{x}_k) - \frac{1-L_f}{2} \| \boldsymbol{x}_{k+1} - \boldsymbol{x}_k \|^2.
    \end{equation}
\end{mylem}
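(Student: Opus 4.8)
The plan is to re-run the argument of Lemma~\ref{lemma: 1} with the error tolerance set to zero, using the fact that the \emph{target} denoiser is the exact proximal operator of $\phi_\sigma$. Under Assumption~\ref{assump: 2} the hypotheses of Proposition~\ref{prop: 1} are met (with $k = 1$), so $\mathsf{D}_\sigma = \mathsf{prox}_{\phi_\sigma}$ is single-valued on $\mathrm{Im}(\mathsf{D}_\sigma)$ and attains the minimum of the proximal objective exactly; hence the iterate $\boldsymbol{x}_{k+1} = \mathsf{D}_\sigma(\boldsymbol{z}_{k+1})$ satisfies Assumption~\ref{assump: 5} with $\epsilon_{k+1} = 0$. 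Only Assumptions~\ref{assump: 1}--\ref{assump: 3} enter, since Assumptions~\ref{assump: 4}--\ref{assump: 5} concern the mismatched denoiser and play no role here.

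First I would reuse the majorization--minimization surrogate
\[
Q(\boldsymbol{x}, \boldsymbol{y}) = \lambda f(\boldsymbol{y}) + \lambda\langle \boldsymbol{x} - \boldsymbol{y}, \nabla f(\boldsymbol{y})\rangle + \tfrac{1}{2}\|\boldsymbol{x} - \boldsymbol{y}\|^2 + \phi_\sigma(\boldsymbol{x})
\]
from the proof of Lemma~\ref{lemma: 1}. Since $\boldsymbol{z}_{k+1} = \boldsymbol{x}_k - \lambda\nabla f(\boldsymbol{x}_k)$, completing the square in $\boldsymbol{x}$ shows $\mathop{\arg\min}_{\boldsymbol{x}} Q(\boldsymbol{x}, \boldsymbol{x}_k) = \mathop{\arg\min}_{\boldsymbol{u}}\bigl(\phi_\sigma(\boldsymbol{u}) + \tfrac{1}{2}\|\boldsymbol{u} - \boldsymbol{z}_{k+1}\|^2\bigr) = \boldsymbol{x}_{k+1}$, so that $Q(\boldsymbol{x}_{k+1}, \boldsymbol{x}_k) \le Q(\boldsymbol{x}_k, \boldsymbol{x}_k) = F_{\lambda, \sigma}(\boldsymbol{x}_k)$ with no additive slack. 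Expanding this inequality and cancelling the common $\lambda f(\boldsymbol{x}_k)$ term would give the analogue of~\eqref{eq: mm} without the $\epsilon_{k+1}$ contribution, namely $\phi_\sigma(\boldsymbol{x}_{k+1}) \le \phi_\sigma(\boldsymbol{x}_k) - \lambda\langle \boldsymbol{x}_{k+1} - \boldsymbol{x}_k, \nabla f(\boldsymbol{x}_k)\rangle - \tfrac{1}{2}\|\boldsymbol{x}_{k+1} - \boldsymbol{x}_k\|^2$.

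Then I would apply the descent lemma to $f$ (Assumption~\ref{assump: 1}, $L_f$-Lipschitz gradient), $f(\boldsymbol{x}_{k+1}) \le f(\boldsymbol{x}_k) + \langle \boldsymbol{x}_{k+1} - \boldsymbol{x}_k, \nabla f(\boldsymbol{x}_k)\rangle + \tfrac{L_f}{2}\|\boldsymbol{x}_{k+1} - \boldsymbol{x}_k\|^2$, and add $\lambda$ times it to the previous display: the inner-product terms cancel and the two quadratics combine into $-\tfrac{1 - L_f}{2}\|\boldsymbol{x}_{k+1} - \boldsymbol{x}_k\|^2$ under $\lambda L_f < 1$, giving $F_{\lambda, \sigma}(\boldsymbol{x}_{k+1}) = \lambda f(\boldsymbol{x}_{k+1}) + \phi_\sigma(\boldsymbol{x}_{k+1}) \le F_{\lambda, \sigma}(\boldsymbol{x}_k) - \tfrac{1 - L_f}{2}\|\boldsymbol{x}_{k+1} - \boldsymbol{x}_k\|^2$, which is the claim. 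I do not expect a genuine obstacle: the argument is the $\epsilon_{k+1} = 0$ specialization of Lemma~\ref{lemma: 1}, and the only step needing care is certifying, via Proposition~\ref{prop: 1}, that the target MMSE denoiser coincides with $\mathsf{prox}_{\phi_\sigma}$ exactly, so that the surrogate $Q(\cdot, \boldsymbol{x}_k)$ is minimized with zero error and the error term of Lemma~\ref{lemma: 1} vanishes.
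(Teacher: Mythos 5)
Your proposal follows essentially the same route as the paper's own proof: both introduce the majorization--minimization surrogate $Q(\boldsymbol{x},\boldsymbol{y})$, identify $\boldsymbol{x}_{k+1}$ as its exact (unique, by Proposition~\ref{prop: 1}) minimizer via completing the square, deduce $Q(\boldsymbol{x}_{k+1},\boldsymbol{x}_k)\le Q(\boldsymbol{x}_k,\boldsymbol{x}_k)=F_{\lambda,\sigma}(\boldsymbol{x}_k)$, and finish with the descent lemma at $t=1/L_f$. Viewing it as the $\epsilon_{k+1}=0$ specialization of Lemma~\ref{lemma: 1} is a correct and faithful reading of what the paper does.
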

\begin{proof}
    Introducing the majorization-minimization (MM) function
    \[
    Q (\boldsymbol{x}, \boldsymbol{y}) = \lambda f(\boldsymbol{y}) + \lambda \left< \boldsymbol{x}-\boldsymbol{y}, \nabla f(\boldsymbol{y}) \right> + \frac{1}{2} \|\boldsymbol{x}-\boldsymbol{y}\|^2 + \phi_\sigma(\boldsymbol{x}),
    \]
    we have
    \[
    Q(\boldsymbol{x},\boldsymbol{x}) = F_{\lambda, \sigma}(\boldsymbol{x}),
    \]
    and
    \begin{align*}
	\mathop{\arg\min}_{\boldsymbol{x}} Q(\boldsymbol{x},\boldsymbol{y}) &= \mathop{\arg\min}_{\boldsymbol{x}} \lambda \left< \boldsymbol{x}-\boldsymbol{y}, \nabla f(\boldsymbol{y}) \right> + \frac{1}{2} \|\boldsymbol{x}-\boldsymbol{y} \|^2 + \phi_\sigma(\boldsymbol{x}) \\
	&= \mathop{\arg\min}_{\boldsymbol{x}} \phi_\sigma(\boldsymbol{x}) + \frac{1}{2} \| \boldsymbol{x} - (\boldsymbol{y} - \lambda \nabla f(\boldsymbol{y})) \|^2 \\
	&= \mathrm{Prox}_{\phi_\sigma} \circ (Id - \lambda \nabla f) (\boldsymbol{y}),
    \end{align*}
    the $\mathop{\arg\min}$ is unique by Proposition~\ref{prop: 1} and by definition of the $\mathop{\arg\min}$, $\boldsymbol{x}_{k+1} = \mathrm{Prox}_{\phi_\sigma} \circ (Id - \lambda \nabla f) (\boldsymbol{x}_k)$ implies that
    \[
    Q(\boldsymbol{x}_{k+1}, \boldsymbol{x}_k) \leq Q(\boldsymbol{x}_k, \boldsymbol{x}_k) = F_{\lambda, \sigma}(\boldsymbol{x}_k).
    \]
    Moreover, with $f$ being $L_f$-smooth, we have by the descent lemma, for any $t \leq \frac{1}{L_f}$ and any $\boldsymbol{x}, \boldsymbol{y}\in \mathbb{R}^n$,
    \[
    f(\boldsymbol{x}) \leq f(\boldsymbol{y}) + \left< \boldsymbol{x}-\boldsymbol{y}, \nabla f(\boldsymbol{y}) \right> + \frac{1}{2t} \| \boldsymbol{x}-\boldsymbol{y} \|^2.
    \]
    Note that $Q(\boldsymbol{x}_{k+1}, \boldsymbol{x}_k) \leq Q(\boldsymbol{x}_k, \boldsymbol{x}_k)$ implies that 
    \[
    \phi_\sigma (\boldsymbol{x}_{k+1}) \leq \phi_\sigma(\boldsymbol{x}_k) - \lambda \left< \boldsymbol{x}_{k+1}-\boldsymbol{x}_k, \nabla f(\boldsymbol{x}_k) \right> - \frac{1}{2} \| \boldsymbol{x}_{k+1} - \boldsymbol{x}_k \|^2,
    \]
    using above descent lemma with stepsize $t = \frac{1}{L_f}$, we get
    \begin{align*}
	F_{\lambda, \sigma} (\boldsymbol{x}_{k+1}) &= \lambda f(\boldsymbol{x}_{k+1}) + \phi_\sigma(\boldsymbol{x}_{k+1}) \\
	&\leq \phi_\sigma(\boldsymbol{x}_k) - \lambda \left< \boldsymbol{x}_{k+1} - \boldsymbol{x}_k, \nabla f(\boldsymbol{x}_k) \right> - \frac{1}{2} \| \boldsymbol{x}_{k+1} - \boldsymbol{x}_k \|^2 + \lambda f(\boldsymbol{x}_k) + \lambda \left< \boldsymbol{x}_{k+1} - \boldsymbol{x}_k, \nabla f(\boldsymbol{x}_k) \right> \\ &\quad + \frac{L_f}{2} \|\boldsymbol{x}_{k+1} - \boldsymbol{x}_k \|^2 \\
	&= F_{\lambda, \sigma} (\boldsymbol{x}_k) - \frac{1-L_f}{2} \|\boldsymbol{x}_{k+1} - \boldsymbol{x}_k \|^2,
\end{align*}
this complete the proof.
\end{proof}
\textbf{Theorem} Run PnP-PGD using the MMSE denoiser for $t\geq 1$ iterations under Assumption~\ref{assump: 1}-\ref{assump: 3}, with $\lambda L_f < 1$, then, we have
    \begin{equation}
	\min_{1 \leq k \leq t} \| \nabla F_{\lambda, \sigma} (\boldsymbol{x}_k) \|^2 \leq \frac{1}{t} \sum_{k=1}^t \| \nabla F_{\lambda, \sigma} (\boldsymbol{x}_k) \|^2 \leq \frac{1}{t} \cdot \frac{8}{1 - L_f} \left( F_{\lambda, \sigma}(\boldsymbol{x}_0) - F_{\lambda, \sigma}^* \right).
    \end{equation}

\begin{proof}
    By Proposition~\ref{prop: 1}, for $\boldsymbol{x} \in \mathsf{Im}(\mathsf{D}_\sigma)$, $\phi_\sigma$ is differentiable,  and
    \[
    \nabla \phi_\sigma(\boldsymbol{x}) = \mathsf{D}^{-1}_\sigma(\boldsymbol{x}) - \boldsymbol{x},
    \]
    for the PnP-PGD iterates:
    \begin{equation*}
        \begin{cases}
        \boldsymbol{z}_{k+1} = \boldsymbol{x}_k - \lambda \nabla f(\boldsymbol{x}_k) \\
        \boldsymbol{x}_{k+1} = \mathsf{D}_\sigma(\boldsymbol{z}_{k+1})
        \end{cases}
    \end{equation*}
    we first derivate the optimality condition:
    \begin{align*}
	\nabla \phi_\sigma (\boldsymbol{x}_{k+1}) &= \mathsf{D}_\sigma^{-1} (\boldsymbol{x}_{k+1}) - \boldsymbol{x}_{k+1} \\ 
	&= \boldsymbol{z}_{k+1} - \boldsymbol{x}_{k+1} \\
	&= \boldsymbol{x}_k - \lambda \nabla f(\boldsymbol{x}_k) - \boldsymbol{x}_{k+1}
    \end{align*}
    and $\nabla F_{\lambda, \sigma} (\boldsymbol{x}_{k+1}) = \lambda \nabla f(\boldsymbol{x}_{k+1}) + \nabla \phi_\sigma (\boldsymbol{x}_{k+1})$, we then have
    \begin{align*}
        \nabla F_{\lambda, \sigma} (\boldsymbol{x}_{k+1}) &= \lambda \nabla f(\boldsymbol{x}_{k+1}) + \nabla \phi_\sigma (\boldsymbol{x}_{k+1}) \\
        &= \lambda \nabla f(\boldsymbol{x}_{k+1}) + \boldsymbol{x}_k - \lambda \nabla f(\boldsymbol{x}_k) - \boldsymbol{x}_{k+1} \\
        &= (\boldsymbol{x}_k - \boldsymbol{x}_{k+1}) + \lambda \left(\nabla f(\boldsymbol{x}_{k+1}) - \nabla f(\boldsymbol{x}_k)\right)
    \end{align*}
    thus
    \begin{align*}
	\| \nabla F_{\lambda, \sigma} (\boldsymbol{x}_{k+1}) \| &\leq \| \boldsymbol{x}_k - \boldsymbol{x}_{k+1} \| + \lambda \| \nabla f(\boldsymbol{x}_{k+1}) - \nabla f(\boldsymbol{x}_k) \| \\
    & \leq  \| \boldsymbol{x}_k - \boldsymbol{x}_{k+1} \| + \lambda L_f \| \boldsymbol{x}_{k+1} -  \boldsymbol{x}_k \| \\
    &\leq (1+\lambda L_f) \| \boldsymbol{x}_{k+1} - \boldsymbol{x}_k \|
    \end{align*}
    then we have
    \[
    \| \nabla F_{\lambda, \sigma} (\boldsymbol{x}_{k+1}) \|^2 \leq (1+\lambda L_f)^2 \| \boldsymbol{x}_{k+1} - \boldsymbol{x}_k \|^2.
    \]
    By Lemma~\ref{lemma: 3}, we have
    \[
    \| \boldsymbol{x}_{k+1} - \boldsymbol{x}_k \|^2 \leq \frac{2}{1-L_f} \left( F_{\lambda, \sigma}(\boldsymbol{x}_k) - F_{\lambda, \sigma}(\boldsymbol{x}_{k+1}) \right),
    \]
    combining these two relations, we get
    \begin{align*}
        \| \nabla F_{\lambda, \sigma} (\boldsymbol{x}_{k+1}) \|^2 &\leq \frac{2(1+\lambda L_f)^2}{1-L_f} \left( F_{\lambda, \sigma}(\boldsymbol{x}_k) - F_{\lambda, \sigma}(\boldsymbol{x}_{k+1}) \right) \\
        &\leq \frac{8}{1-L_f} \left( F_{\lambda, \sigma}(\boldsymbol{x}_k) - F_{\lambda, \sigma}(\boldsymbol{x}_{k+1}) \right),
    \end{align*}
    summing over $k=1, 2, \ldots, t$ gives
    \[
    \sum_{k=1}^t \| \nabla F_{\lambda, \sigma} (\boldsymbol{x}_k) \|^2 \leq \frac{8}{1 - L_f} \left( F_{\lambda, \sigma}(\boldsymbol{x}_0) - F_{\lambda, \sigma}^* \right).
    \]
    Then by averaging over $t \geq 1$ iterations, we have
    \[
    \min_{1 \leq k \leq t} \| \nabla F_{\lambda, \sigma} (\boldsymbol{x}_k) \|^2 \leq \frac{1}{t} \sum_{k=1}^t \| \nabla F_{\lambda, \sigma} (\boldsymbol{x}_k) \|^2 \leq \frac{1}{t} \cdot \frac{8}{1 - L_f} \left( F_{\lambda, \sigma}(\boldsymbol{x}_0) - F_{\lambda, \sigma}^* \right),
    \]
    this complete the proof.
\end{proof}

\subsection{Proof of Proposition~\ref{myprop: 2}}
\textbf{Proposition} With Assumption~\ref{myassup: 6},~\ref{myassup: 7}, the target denoiser $\mathrm{D}^*_\sigma$ is $\pi$-equivairant, which means for $\forall \mathrm{T}_g \in \mathcal{G}, x \in \mathbb{R}^n$:
    \begin{equation}
        \mathrm{T}_g^{-1} \circ \mathrm{D}^*_\sigma (\mathrm{T}_g \circ x) = \mathrm{D}^*_\sigma (x).
    \end{equation}

\begin{proof}
    We denote the clean image distribution as $p(x)$, noised image distribution as $p_\sigma(y)$:
    \begin{equation*}
        p_\sigma (y) = \int p(x) \phi(y-x)dx,
    \end{equation*}
    for the transformed noised distribution:
    \[
    p_\sigma (\mathrm{T}_g y) = \int p(x) \phi(\mathrm{T}_gy-x)dx,
    \]
    let $x' = \mathrm{T}_g^T x$, i.e. $x = \mathrm{T}_g x'$. Since $\mathrm{T}_g$ is orthogonal, and $J |\det(\mathrm{T}_g)|=1$, thus, $dx = dx'$. Plug this into above equation, yields
    \[
    p_\sigma(\mathrm{T}_g y) = \int p(\mathrm{T}_g x') \phi(\mathrm{T}_g y - \mathrm{T}_g x')dx'.
    \]
    since $p(\mathrm{T}_g x') = p(x')$ and $\phi(\mathrm{T}_g(y-x')) = \phi(y-x')$ from Assumption~\ref{myassup: 6},~\ref{myassup: 7}, we have
    \[
    p_\sigma(\mathrm{T}_g y) = \int p(x') \phi(y-x')dx' 
    \]
    which we can conclude that
    \[
    p_\sigma(\mathrm{T}_g y) = p_\sigma(y),
    \]
    which means the noised distribution is also invariant.
    Differentiating both sides w.r.t $y$ of above equation
    \[
    \text{RHS} = \nabla_y p_\sigma(y)
    \]
    \[
    \text{LHS} = \nabla_y p_\sigma (\mathrm{T}_g y) = \mathrm{T}_g^T \nabla p_\sigma(\mathrm{T}_g y).
    \]
    Then we can have
    \[
    \nabla_y p_\sigma(y) = \mathrm{T}_g^T \nabla p_\sigma (\mathrm{T}_g y),
    \]
    which is equivalent to
    \begin{align*}
        \mathrm{T}_g \nabla_y p_\sigma(y) &= \mathrm{T}_g \mathrm{T}_g^T \nabla p_\sigma (\mathrm{T}_g y) \\
        &= \nabla p_\sigma(\mathrm{T}_g y).
    \end{align*}
    Dividing both side by $p_\sigma(\mathrm{T}_g y)$:
    \[
    \frac{\mathrm{T}_g \nabla_y p_\sigma (y)}{p_\sigma(y)} = \frac{\nabla p_\sigma(\mathrm{T}_g y)}{p_\sigma(\mathrm{T}_g y)}
    \]
    which means
    \begin{equation}\label{eq: score}
    \nabla \log p_\sigma(\mathrm{T}_g y) = \mathrm{T}_g \nabla \log p_\sigma (y),
    \end{equation}
    then by Tweedie formula, the target MMSE denoiser is defined as:
    \[
    \mathrm{D}_\sigma^*(y) = y + \sigma^2 \nabla \log p_\sigma (y)
    \]
    and
    \[
    \mathrm{D}_\sigma^* (\mathrm{T}_g y) = \mathrm{T}_g y + \sigma^2 \nabla \log p_\sigma(\mathrm{T}_g y).
    \]
    Using Eq.~\eqref{eq: score} we have
    \[
    \mathrm{D}_\sigma^* (\mathrm{T}_g y) = \mathrm{T}_g y + \sigma^2 \mathrm{T}_g \nabla \log p_\sigma(y)
    \]
    \[
    \mathrm{D}_\sigma^* (\mathrm{T}_g y) = \mathrm{T}_g \left( y + \sigma^2 \nabla \log p_\sigma(y) \right) = \mathrm{T}_g \mathrm{D}_\sigma^* (y),
    \]
    this complete the proof.
\end{proof}

\subsection{Proof of Lemma~\ref{mylem: EBR}}
\textbf{Lemma (Equivariant Bias Reduction)} For $\forall x \in \mathcal{X}$ and $g \sim G$, if the bias $\mathcal{E}(x)$ generated by PnP is anisotropic, then the bias $ \tilde{\mathcal{E}}(x) $ generated by EPnP satisfies
    \begin{equation}
        \| \tilde{\mathcal{E}}(x) \|^2 < \int_G \| \mathcal{E}(\mathrm{T}_g x) \|^2 d\pi(g) = \mathbb{E}_{g} [\|\mathcal{E}(\mathrm{T}_g x)\|^2].
    \end{equation}
\begin{proof}
    First we have
    \begin{align*}
	\tilde{\mathcal{E}} (x) &= \int_G \mathrm{T}_g^T \cdot \hat{\mathrm{D}}_\sigma (\mathrm{T}_gx) d\pi(g) - \mathrm{D}_\sigma^*(x) \\
	&= \int_G \mathrm{T}_g^T \cdot \left[ \mathrm{D}_\sigma^*(\mathrm{T}_gx) + \mathcal{E}(\mathrm{T}_gx) \right] d\pi(g) - \mathrm{D}_\sigma^*(x) \\
    &= \int_G \left[ \mathrm{T}_g^T \cdot \mathrm{D}_\sigma^*(\mathrm{T}_gx)\right]d\pi(g) + \int_G \left[ \mathrm{T}_g^T \cdot \mathcal{E}(\mathrm{T}_gx) \right]d\pi(g) - \mathrm{D}_\sigma^*(x), 
    \end{align*}
    from Proposition~\ref{myprop: 2}, we have that the target denoiser $\mathrm{D}_\sigma^*$ is equivariant, i.e.,  $\mathrm{T}_g^{T} \cdot \mathrm{D}_\sigma^*(\mathrm{T}_g \cdot x) = \mathrm{D}_\sigma^*(x)$, then we can conclude that
    \[
    \tilde{\mathcal{E}}(x) = \mathbb{E}_{g \sim \pi}[] T_g^{-1} \mathcal{E}(T_gx)],
    \]
    using variance decomposition we have
    \[
    \|\tilde{\mathcal{E}}(x)\|^2 = \mathbb{E}_{g} [\|\mathrm{T}_g^{-1} \cdot \mathcal{E}(\mathrm{T}_g  x)\|^2] - \text{Var}_{g}(\mathrm{T}_g^{-1} \cdot \mathcal{E}(\mathrm{T}_g  x))
    \]
    since $\mathrm{T}_g$ is isometric transform, i.e. $\|\mathrm{T}_g^{-1} \cdot v\| = \|v\|$, thus above equation is equivalent to
    \[
    \|\tilde{\mathcal{E}}(x)\|^2 = \mathbb{E}_{g} [\| \mathcal{E}(\mathrm{T}_g  x)\|^2] - \text{Var}_{g}(\mathrm{T}_g^{-1} \cdot \mathcal{E}(\mathrm{T}_g x)),
    \]
    since bias $\mathcal{E}(x)$ is anisotropic, we have
    \[
    \text{Anisotropy} \implies \text{Var}_{g}(\mathrm{T}_g^{-1} \cdot \mathcal{E}(\mathrm{T}_g x)) > 0,
    \]
    thus
    \[
    \|\tilde{\mathcal{E}}(x)\|^2 = \mathbb{E}_{g} [\|\mathcal{E}(\mathrm{T}_g x)\|^2] - \underbrace{\text{Var}(...)}_{>0} < \mathbb{E}_{g} [\|\mathcal{E}(\mathrm{T}_g x)\|^2]
    \]
    this complete the proof.
\end{proof}

\subsection{Proof of Theorem~\ref{mythm: 3}}
the proof is similar to Theorem~\ref{them: 1}. At $k+1$-th iteration, for the output of mismatched denoiser $x_{k+1} = \tilde{\mathrm{D}}_\sigma (z_{k+1})$, by Lemma~\ref{lemma: 2}, we have
\begin{align*}
	\nabla F_{\lambda, \sigma} (x_{k+1}) &= \lambda \nabla f(x_{k+1}) + \nabla \tilde{\phi}_\sigma (x_{k+1}) \\
	&= \lambda \nabla f(x_{k+1}) + z_{k+1} - x_{k+1} - \tilde{\delta}_{k+1} \\
	&= \lambda \nabla f(x_{k+1}) + x_k - \lambda \nabla f(x_k) - x_{k+1} - \tilde{\delta}_{k+1} \\
	&= (x_k - x_{k+1}) + \lambda \left( \nabla f(x_{k+1}) - \nabla f(x_k) \right) - \tilde{\delta}_{k+1}, \\
\end{align*}
thus
\[
\nabla F_{\lambda, \sigma} (x_{k+1}) \leq \| x_k - x_{k+1} \| + \lambda \| \nabla f(x_{k+1}) - \nabla f(x_k) \| + \| \tilde{\delta}_{k+1} \|,
\]
since $\nabla f$ is $L_f$-Lipschitz, we have
\[
\| \nabla F_{\sigma, \lambda} (x_{k+1}) \| \leq (1+\lambda L_f) \| x_{k+1} - x_k \| + \| \tilde{\delta}_{k+1} \|,
\]
using $(\mathbf{a} + \mathbf{b})^2 \leq 2 \mathbf{a}^2 + 2 \mathbf{b}^2$ and Lemma~\ref{lemma: 2}, we have
\[
\| \nabla F_{\lambda, \sigma} (x_{k+1}) \|^2 \leq 2 (1+\lambda L_f)^2 \| x_{k+1} -x_k \|^2 + 4 \frac{\tilde{\epsilon}_{k+1}}{1-L},
\]
using Lemma~\ref{lemma: 1}, we have
\[
\| x_{k+1} - x_k \|^2 \leq \frac{2}{1 - L_f} \left( F_{\lambda, \sigma}(x_k) - F_{\lambda, \sigma}(x_{k+1}) \right) + \frac{2\tilde{\epsilon}_{k+1}}{1-L_f},
\]
combining these two relations,
\begin{align*}
\| \nabla F_{\lambda, \sigma} (x_{k+1}) \|^2 &\leq \frac{4(1+\lambda L_f)^2}{1-L_f} \left( F_{\lambda, \sigma} (x_{k}) - F_{\lambda, \sigma} (x_{k+1}) \right) + \frac{4(1+\lambda L_f)^2}{1-L_f} \tilde{\epsilon}_{k+1} + \frac{4 \tilde{\epsilon}_{k+1}}{1-L} \\
&\leq \frac{16}{1-L_f} \left( F_{\lambda, \sigma} (x_{k}) - F_{\lambda, \sigma} (x_{k+1}) \right) + \left( \frac{16}{1-L_f} + \frac{4}{1-L} \right) \tilde{\epsilon}_{k+1},
\end{align*}
summing over $k=0, 1, \ldots, t-1$ gives
\[
\sum_{k=1}^t \| \nabla F_{\lambda, \sigma} (x_k) \|^2 \leq \frac{16}{1-L_f} \left( F_{\lambda, \sigma} (x_{0}) - F_{\lambda, \sigma}^* \right) + \left( \frac{16}{1-L_f} + \frac{4}{1-L} \right) \sum_{k=1}^t \tilde{\epsilon}_{k}.
\]
Then by averaging over $t \geq 1$ iterations, we have
\[
\min_{1 \leq k \leq t} \| \nabla F_{\lambda, \sigma} (x_k) \|^2 \leq \frac{1}{t}\sum_{k=1}^t \| \nabla F_{\lambda, \sigma} (x_k) \|^2 \leq \frac{1}{t} \cdot \frac{16}{1-L_f} \left( F_{\lambda, \sigma} (x_{0}) - F_{\lambda, \sigma}^* \right) + \left( \frac{16}{1-L_f} + \frac{4}{1-L} \right)\frac{1}{t} \cdot \sum_{k=1}^t \tilde{\epsilon}_{k},
\]
then by Lemma~\ref{mylem: EBR}, we have
\[
\tilde{\epsilon}_k = \hat{\epsilon}_k - \text{Var}_{g}  < \hat{\epsilon}_k,
\]
which complete the proof.

\bibliography{main.bib}

\end{document}